\theoremstyle{plain}
\newtheorem{thm}{\protect\theoremname}
  \theoremstyle{plain}
  \newtheorem{lem}[thm]{\protect\lemmaname}
\newcommand{\diag}{\mathop{\mathrm{diag}}}
\newcommand{\bx}{\mathbf{x}}				
\newcommand{\factor}{f}				
\newcommand{\outV}{V}                         
\newcommand{\fis}[1]{\mathrm{ne}(#1)}   	
\newcommand{\fx}[1]{ \mathbf{x}_{\mathrm{ne}(#1)} }   	
\newcommand{\xin}{\mathbf{x}_{ \mathrm{in} }} 			
\newcommand{\xout}{\mathbf{x}_{ \mathrm{out} }}			
\newcommand{\msg}[2]{m_{#1 \rightarrow #2}}			
\newcommand{\approxMsg}[3]{M_{#1 \rightarrow #2}^{#3}}			
\newcommand{\uncert}{{\mathfrak v}}          
\newcommand{\uncertaintyMsg}[3]{{\mathfrak V}_{#1 \rightarrow #2}^{#3}}			
\newcommand{\diffd}{\mathrm{d}}
\newcommand{\projP}[1]{\mathrm{proj} \left [ #1 \right]}   
\newcommand{\argmin}[1]{\mathrm{arg}\mathrm{min}_{#1}}
\newcommand{\kld}[2]{\mathrm{KL} \left [ #1 || #2 \right ]}
\newcommand{\expectationE}[2]{ \mathbb{E}_{#2}  \left[ #1 \right] }
\newcommand{\feax}{\mathsf{x}}
\newcommand{\feaX}{\mathsf{X}}
\newcommand{\feay}{\mathsf{y}}
\newcommand{\feaY}{\mathsf{Y}}
\newcommand{\wjnote}[1]{ }
\newcommand{\aenote}[1]{}
\newcommand{\nhnote}[1]{}
\newcommand{\agnote}[1]{}
 \newcommand{\dsnote}[1]{}
\newcommand{\figref}[1]{Fig.~\ref{#1}}
\newcommand{\secref}[1]{Section~\ref{#1}}
\newcites{sup}{References}
  \providecommand{\lemmaname}{Lemma}
\providecommand{\theoremname}{Theorem}
\begin{document}
\title{Kernel-Based Just-In-Time Learning for \\Passing Expectation Propagation
Messages}

\author{
    Wittawat Jitkrittum,$^1$ \, Arthur Gretton,$^1$ \, Nicolas
    Heess,\thanks{\hspace{1mm} Currently at Google DeepMind.} \, \,
\textbf{S. M. Ali Eslami}$^*$ \\
\textbf{Balaji Lakshminarayanan},$^1$ \, \textbf{Dino Sejdinovic}$^2$ \and
\textbf{ Zolt{\'a}n Szab{\'o}}$^1$  \vspace*{2mm} \\
Gatsby Unit, University College London$^1$ \\
\vspace*{2mm}
University of Oxford$^2$ \\
\texttt{ \{wittawatj,  arthur.gretton\}@gmail.com}, \, \texttt{nheess@gmail.com} \\ 
\texttt{ali@arkitus.com}, \, \texttt{balaji@gatsby.ucl.ac.uk},\\
\texttt{dino.sejdinovic@gmail.com},\, \texttt{zoltan.szabo@gatsby.ucl.ac.uk}
}

\maketitle

\begin{abstract}

  We propose an efficient nonparametric strategy for learning a message operator in expectation propagation (EP), which takes as input the set of incoming messages to a factor node, and produces an outgoing message as output. This learned operator replaces the multivariate integral required in classical EP, which may not have an analytic expression. We use kernel-based regression, which is trained on a set of probability distributions representing the incoming messages, and the associated outgoing messages. The kernel approach has two main advantages: first, it is fast, as it is implemented using a novel two-layer random feature representation of the input message distributions; second, it has principled uncertainty estimates, and can be cheaply updated online, meaning it can request and incorporate new training data when it encounters inputs on which it is uncertain.  In experiments, our approach is able to solve learning problems where a single message operator is required for multiple, substantially different data sets (logistic regression for a variety of classification problems), where
it is essential to
   accurately assess uncertainty and to efficiently and robustly update the message operator.

\end{abstract}

\section{INTRODUCTION}



An increasing priority in Bayesian modelling is to make inference accessible and implementable for practitioners,
without  requiring specialist knowledge.
This
is a goal sought, for instance, in probabilistic programming languages
\citep{WinGooStuSis11,GooManRoyBonTen08}, 
as well as in more granular, component-based
systems \citep{SDT2014,Minka2014}. In all cases, the user
should be able to freely specify what they wish their model to express,
without having to deal with the complexities of sampling, variational
approximation, or distribution conjugacy.  In reality, however, model convenience and
simplicity can  limit
or undermine intended models, sometimes in ways the users
might not expect. To take one example, the inverse gamma prior, which is
widely used as a convenient
conjugate prior for the variance, has quite pathological behaviour \citep{Gelman2006}.
In general, more expressive, freely chosen models are more likely
to require expensive sampling or quadrature approaches, which can make
them challenging to implement or impractical to run.

We address the particular setting of expectation propagation \citep{Minka2001}, a message
passing algorithm wherein messages are confined to being members of a particular parametric
family. The process of integrating incoming messages over a factor potential, and projecting the result onto the
required output family, can be difficult, and in some cases not achievable in closed form.
Thus, a number of approaches have been proposed to implement EP updates numerically, independent
of the details of the factor potential being used.  One approach, due to
\citet{Barthelme2011}, is to compute the message update via importance sampling.
While these estimates converge to the desired integrals for a sufficient number of importance samples, the sampling procedure must be run at every iteration during inference,
hence  it is not viable for large-scale problems.

An improvement on this approach is to use importance sampled instances of input/output
message pairs to train a regression algorithm, which can then be used in place of the sampler.
\citet{Heess2013} use neural networks to learn the mapping from incoming
to outgoing messages, and the learned mappings perform well on a variety of practical problems.
This approach comes with a disadvantage: it requires training data
that cover the entire set of possible input messages for a given type of problem (e.g., datasets
representative of all classification problems the user proposes to solve),
and it has no way of assessing the uncertainty of its prediction, or of updating
the model online in the event that a prediction is uncertain.

The disadvantages of the neural network approach were the basis for work by
\cite{Eslami2014}, who replaced the neural networks with random forests.
The random forests  provide uncertainty estimates for each prediction. This allows them to be trained `just-in-time', during EP inference, whenever the predictor decides it is uncertain.
Uncertainty estimation for random forests relies on unproven heuristics, however: we demonstrate
empirically that such heuristics can become highly misleading as we move away from the initial
training data.
Moreover,  online updating
can result in unbalanced trees, resulting in a cost of prediction of $O(N)$ for training data of size $N$, rather than the ideal of $O(\log(N))$.


We propose a novel, kernel-based approach to learning a message operator nonparametrically
for expectation propagation. The learning algorithm takes the form of a distribution regression
problem, where the inputs are probability measures represented as embeddings
of the distributions to a reproducing kernel Hilbert space (RKHS), and the outputs are vectors of message
parameters \citep{Szabo2014}. 
A first advantage of this approach is that one does not need to pre-specify customized features
of the distributions, as in \citep{Eslami2014,Heess2013}. Rather, 
we use a general characteristic kernel on input distributions 
\citep[eq. 9]{Christmann2010}.
%
%
To make the algorithm computationally tractable, 
we regress  directly in the primal from random Fourier features of the data \citep{Rahimi2007,Le2013,YanSmoZonWil14}.
In particular, we establish a novel random feature representation for when
 inputs are distributions, via a  two-level random feature approach.
This gives us both fast prediction (linear in the number of
random features), and fast online updates (quadratic in the number of random features).

A second advantage of our approach is that, being an instance of Gaussian process
regression, there are well established estimates of predictive uncertainty \citep[Ch. 2]{RasWil06}.
We use these uncertainty estimates so as to determine when to query the importance sampler
for additional input/output pairs, i.e., the uncertain predictions trigger just-in-time updates
of the regressor. We demonstrate empirically that our uncertainty estimates are 
more robust and informative
than those for random forests, especially as we move away from the training data.

\begin{figure}[t]
\centering
\includegraphics[width=0.9\columnwidth]{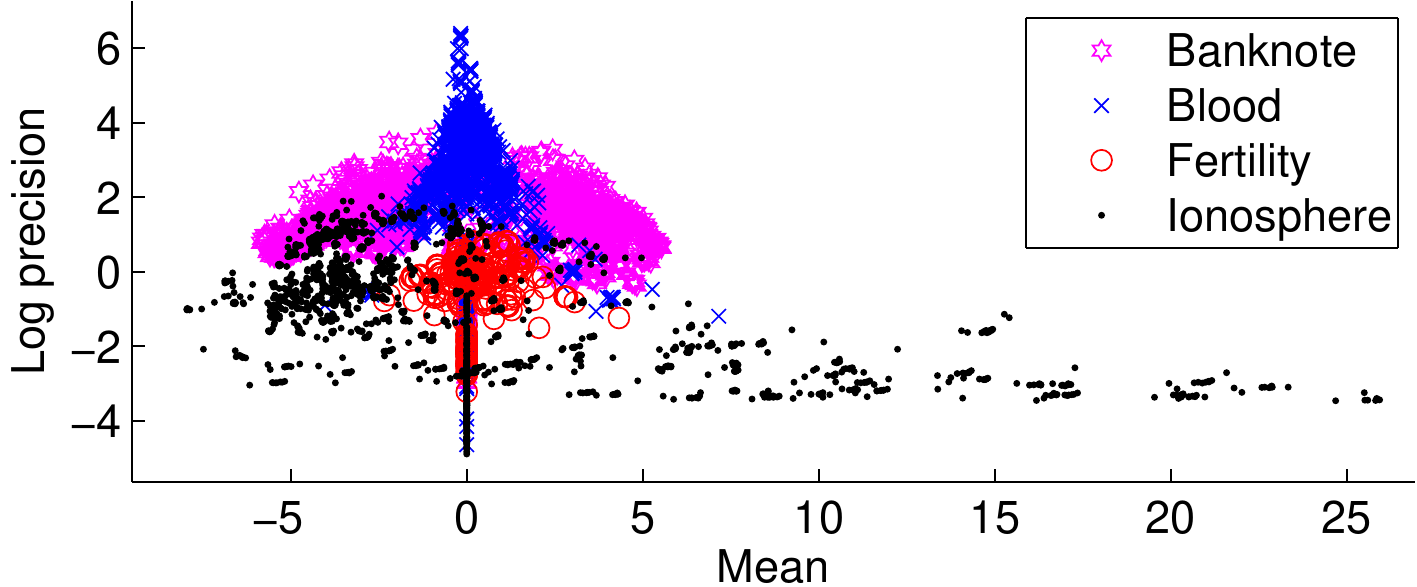}
\caption{Distributions of incoming messages to logistic factor in four different UCI datasets. 
\label{fig:uci_in_msgs}
}
\end{figure}


Our paper proceeds as follows. In \secref{sec:EP}, we introduce the notation for expectation propagation,
and indicate how an importance sampling procedure can be used as an oracle to provide training data for the message operator.
We also give a brief overview of previous learning approaches to the problem, with a focus on
that of \citet{Eslami2014}.
Next, in \secref{sec:Online}, we describe our kernel regression approach, and
the form of an efficient kernel message operator mapping the input messages (distributions embedded in an
RKHS) to outgoing messages (sets of parameters of the outgoing messages).
Finally, in \secref{sec:Experiments}, we describe our experiments, which cover three topics:
a benchmark of our uncertainty estimates, a demonstration of factor learning
on artificial data with well-controlled statistical properties,
and a logistic regression experiment on four different real-world datasets,
demonstrating that our just-in-time learner can correctly evaluate its uncertainty
and update the regression function as the incoming messages change (see \figref{fig:uci_in_msgs}).
Code to implement our method is available online at \url{https://github.com/wittawatj/kernel-ep}.



\section{BACKGROUND}
\label{sec:EP}

We assume that distributions (or densities) $p$ over a set of variables 
$\bx = (x_1, \dots x_d)$ of interest can be represented as factor graphs, i.e.\
%
$p(\bx) = \frac{1}{Z} \prod_{j=1}^J \factor_j(\fx{\factor_j})$.
The factors $\factor_j$ are non-negative functions which are defined over subsets $\fx{\factor_j}$ of the full set of variables $\bx$. These variables form the neighbors of the factor node $\factor_j$ in the factor graph, and we use $\fis{\factor_j}$ to denote the corresponding set of indices. $Z$ is the normalization constant.

We deal with models in which some of the factors have a non-standard form, or may not have a known analytic expression (i.e.\ ``black box'' factors). Although our approach applies to any such factor in principle, in this paper we focus on \textit{directed} factors $\factor(\xout | \xin)$ which specify a conditional distribution over variables $\xout$ given $\xin$ (and thus $\fx{\factor} = (\xout, \xin))$. The only assumption we make is that we are provided with a forward sampling function $f: \xin \mapsto \xout$, i.e., a function that maps (stochastically or deterministically) a setting of the input variables $\xin$ to a sample from the conditional distribution over $\xout \sim \factor(\cdot| \xin)$.
In particular, the ability to evaluate the value of $\factor(\xout | \xin)$ is not assumed.
A natural way to specify $f$ is as  code in a probabilistic program.


\subsection{EXPECTATION PROPAGATION}
\label{sec:EP:MP}


Expectation Propagation (EP) is an approximate iterative procedure for computing marginal beliefs of variables
by iteratively passing messages between variables and factors until convergence \citep{Minka2001}.
It can be seen as an alternative to belief propagation, where the marginals are projected
onto a member of some class of known parametric distributions. 
The message $\msg{ \factor }{\outV}(x_{\outV})$  from factor $\factor$ to variable $\outV\in\fis{\factor}$ is 
\begin{equation}
\frac{ \projP{ 
\int \factor (\fx{\factor}) \prod_{\outV' \in \fis{\factor}} \msg{\outV'}{\factor}(x_{\outV'}) \diffd 
\bx_{\fis{\factor} \backslash \outV}} }
{\msg{\outV}{\factor}(x_{\outV})},
\label{eq:msgPassing:EP}
\end{equation}
where $\msg{\outV'}{\factor}$ are the messages sent to factor $\factor$ from all of its neighboring variables $x_{\outV'}$,
$\projP{p} = \argmin{q \in \mathcal{Q}} \kld{p}{q}$, and $\mathcal{Q}$ is typically in the exponential family, e.g.\ the set of Gaussian or Beta distributions.



Computing the numerator of (\ref{eq:msgPassing:EP}) can be challenging, as it requires evaluating a high-dimensional integral as well as minimization of the Kullback-Leibler divergence to some non-standard distribution. Even for factors with known analytic form this often requires hand-crafted approximations, or the use of expensive numerical integration techniques; for ``black-box'' factors implemented as forward sampling functions, fully nonparametric techniques are needed. 


\citet{Barthelme2011,Heess2013,Eslami2014} propose an alternative, stochastic approach to the integration and projection step.
When the projection  is to a member $q(x|\eta)=h(x)\exp\left(\eta^{\top}u(x)-A(\eta)\right)$ of an exponential family, one simply computes the expectation of the sufficient statistic $u(\cdot)$ under the numerator of (\ref{eq:msgPassing:EP}).
A sample based approximation of this expectation can be obtained via Monte Carlo simulation.   
Given a forward-sampling function $f$ as described above, one especially simple approach is importance sampling, 
\begin{align}
\expectationE{u(x_{\outV})}{\fx{\factor}\sim b }
&\approx \frac{1}{M} \sum_{l=1}^M w(\fx{\factor}^l) u(x_{\outV}^l),
\label{eq:msgIS}
\end{align}
where $\fx{\factor}^l \sim \tilde{b}$, for $l=1,\ldots,M$ and on the left hand side, 
\begin{equation*}
b(\fx{\factor}) = \factor (\fx{\factor}) \prod_{W \in \fis{\factor}} \msg{W}{\factor}(x_{W}). 
\end{equation*}
On the right hand side we draw samples $\fx{\factor}^l$ from some proposal distribution $\tilde{b}$ which we choose to be 
%
$\tilde{b}(\fx{\factor}) = r(\xin)\factor(\xout | \xin)$
%
for some distribution $r$ with appropriate support, and compute importance weights 
\begin{equation*}
w(\fx{\factor}) = \frac{\prod_{W \in \fis{\factor}} \msg{W}{\factor}(x_{W})}{r(\xin)}.
\end{equation*}

Thus the estimated expected sufficient statistics provide us with an estimate of the parameters $\eta$ of the result $q$ of the projection $\projP{p}$, from which the message is readily computed.\dsnote{isn't $\eta$ the actual message?}


\subsection{JUST-IN-TIME LEARNING OF MESSAGES}
\label{sec:EP:JIT}

Message approximations as in the previous section could be 
used directly when running the EP algorithm, as in \cite{Barthelme2011}, 
but this approach can suffer
when the number of samples $M$ is small,
and the  importance sampling estimate is not reliable. 
On the other hand, for large $M$ the computational cost of running EP 
with approximate messages can be very high, 
as importance sampling must be performed for sending each outgoing message.
To obtain low-variance message approximations at lower computational cost, \cite{Heess2013} and \cite{Eslami2014} both amortize previously computed approximate messages by training a function approximator to directly map a tuple of incoming variable-to-factor messages $(\msg{\outV'}{\factor} )_{\outV' \in \fis{\factor}}$ to an approximate factor to variable message $\msg{\factor}{\outV}$, i.e.\, they learn a mapping
\begin{equation}
\approxMsg{\factor}{\outV}{\theta}: (\msg{\outV'}{\factor} )_{\outV' \in \fis{\factor}} \mapsto \msg{\factor}{\outV},
\end{equation}
where $\theta$ are the parameters of the approximator.


\cite{Heess2013} use neural networks and a large, fixed training set to learn their approximate message operator prior to running EP. By contrast, \cite{Eslami2014}
employ random forests as their class of learning functions, and update their approximate message operator on the fly during inference, depending on the predictive uncertainty of the current message operator. Specifically, they endow their function approximator with an uncertainty estimate
\begin{equation}
\uncertaintyMsg{\factor}{\outV}{\theta}: (\msg{\outV'}{\factor} )_{\outV' \in \fis{\factor}} \mapsto \uncert,
\end{equation}
where $\uncert$ indicates the expected unreliability  of the predicted, approximate message $\msg{\factor}{\outV}$ returned by $\approxMsg{\factor}{\outV}{\theta}$. If $\uncert = \uncertaintyMsg{\factor}{\outV}{\theta} \left( (\msg{\outV'}{\factor} )_{\outV' \in \fis{\factor}}\right)$ 
exceeds a pre-defined threshold, the required message is approximated via
importance sampling (cf.\ \eqref{eq:msgIS}) and
$\approxMsg{\factor}{\outV}{\theta}$ is updated
on this new datapoint (leading to a new set of parameters  $\theta'$ with $\uncertaintyMsg{\factor}{\outV}{\theta'} \left( (\msg{\outV'}{\factor} )_{\outV' \in \fis{\factor}}\right)) < \uncertaintyMsg{\factor}{\outV}{\theta} \left( (\msg{\outV'}{\factor} )_{\outV' \in \fis{\factor}}\right)$.

\cite{Eslami2014} estimate the predictive uncertainty  $\uncertaintyMsg{\factor}{\outV}{\theta}$  via
the heuristic of looking at the variability of the forest predictions for each point \citep{CriSho13}.
They implement their online updates by splitting the trees at their
leaves.
Both these mechanisms can be problematic, however. First, the heuristic
used in computing uncertainty has no guarantees: indeed, uncertainty estimation for
random forests remains a challenging topic of current research \citep{Hutter2009}. This is not merely a theoretical
consideration: in our experiments in \secref{sec:Experiments}, we demonstrate that 
uncertainty heuristics for random forests become unstable and inaccurate as we move away from the initial
training data. Second, online updates of random forests may not work well
when the newly observed data are from a very different distribution to the
initial training sample \citep[e.g.][Fig. 3]{LakRoyTeh14}. 
For large amounts of training set drift, the
leaf-splitting approach of \citeauthor{Eslami2014} can result in a decision tree in the form of a long chain, giving a worst case
cost of prediction (computational and storage) of $O(N)$ for training data of size $N$, vs the ideal of $O(\log(N))$
for balanced trees.
Finally, note that the approach of \citeauthor{Eslami2014}\ uses certain bespoke features of the factors when
specifying tree traversal in the random forests,
notably the value of the factor potentials at the mean and mode of the incoming messages.
These features 
require expert knowledge of the model on the part
of the practitioner, and are not available in the ``forward sampling'' setting. The present
work does not employ such features.


In terms of computational cost, prediction for the random forest of \citeauthor{Eslami2014} costs $O(K D_r D_t \log(N))$,
and updating following a new observation costs  $O(K D_r^3 D_t \log(N))$,
where $K$ is the number of trees in the random forest, $D_t$
is the number of features used in tree traversal, $D_r$ is the number of features used in making predictions
at the leaves, and $N$ is the number of training messages.
Representative values are $K=64$, $D_t=D_r\approx 15$, and $N$ in the range of  1,000 to 5,000.

\section{KERNEL LEARNING OF OPERATORS}\label{sec:Online}

We now propose a kernel regression method for jointly learning the message operator $\approxMsg{\factor}{\outV}{\theta}$ and
uncertainty estimate $\uncertaintyMsg{\factor}{\outV}{\theta}$. We regress from the tuple of incoming messages, which
are probability distributions, to the parameters of the outgoing message. 
To this end we apply a kernel over distributions from \citep{Christmann2010} 
to the case where the input consists of more than one distribution.

We note that \citet{SonGreGue10,SonGreBicLowGue11} propose a related regression approach for predicting outgoing
messages from incoming messages, for the purpose of belief propagation. 
Their setting is  different from ours, however, as their messages are smoothed conditional density functions rather than parametric distributions of known form.

To achieve fast predictions and factor updates, we follow
\citet{Rahimi2007,Le2013,YanSmoZonWil14}, and express the kernel regression in
terms of random features whose expected inner product
is equal to the kernel function; i.e.\ we perform regression directly in the primal on these random features.
In \secref{sec:kernelsOnDistributions}, we define our kernel on tuples of distributions, and then derive the corresponding random feature representation in 
\secref{sec:randomFeatureApproximations}. \secref{sec:ridgeRegression} describes the regression algorithm, as well as our strategy for uncertainty evaluation and online updates.

\subsection{KERNELS ON TUPLES OF DISTRIBUTIONS}\label{sec:kernelsOnDistributions}

In the following, we consider only a single factor, and therefore drop the factor identity from our notation. We write the set of $c$ incoming messages to a factor node as a tuple of
probability distributions 
$R:=(r^{(l)})_{l=1}^c$ of random variables $X^{(l)}$ on respective  domains $\mathcal{X}^{(l)}$.
Our goal is to define a kernel between one such tuple, and a second one,
which we will write 
$S:=(s^{(l)})_{l=1}^c$.



We define our kernel in terms of embeddings of the tuples $R,S$ into a reproducing kernel Hilbert space (RKHS). We first consider the embedding of a single distribution
in the tuple: Let us define an RKHS
$\mathcal{H}^{(l)}$ on each domain, with respective kernel $k^{(l)}(x^{(l)}_1,x^{(l)}_2)$.
We may embed individual probability distributions to these RKHSs, following \citep{Smola2007}.
The {\em mean embedding} of $r^{(l)}$ is written
\begin{equation}
\mu_{r^{(l)}}(\cdot) := \int k^{(l)}(x^{(l)},\cdot) \, dr^{(l)} (x^{(l)}).
\end{equation}
Similarly, a mean embedding may be defined on the product of  messages in a tuple
$\mathsf{r}=\times_{l=1}^{c}r^{(l)}$ as
\begin{equation}
\mu_{\mathsf{r}}
:=
\int k([x^{(1)}, \ldots, x^{(c)}],\cdot) \, d\mathsf{r}(x^{(1)}, \ldots, x^{(c)}),
\label{eq:featureOfTuple}
\end{equation}
where we have defined the joint kernel $k$ on the product space
$\mathcal{X}^{(1)}\times\cdots\times\mathcal{X}^{(c)}$.
Finally,  a kernel on two such embeddings $\mu_{\mathsf{r}},\mu_{\mathsf{s}}$ of tuples $R,S$ can be obtained as in \citet[eq. 9]{Christmann2010},
\begin{equation}
\kappa(\mathsf{r}, \mathsf{s}) = \exp\left(-\frac{\|\mu_{\mathsf{r}}-\mu_{\mathsf{s}}\|_{\mathcal{H}}^{2}}{2\gamma^{2}}\right).
\label{eq:gauss_joint_emb}
\end{equation}
This kernel has two parameters: $\gamma^{2}$, and the width parameter of the
kernel $k$ defining $\mu_{\mathsf{r}} = \mathbb{E}_{x \sim \mathsf{r}}k(x,\cdot)$.

We have considered several alternative kernels on tuples of messages, including
kernels on the message parameters, kernels on a tensor feature space of the
distribution embeddings in the tuple, and dot products of the features
(\ref{eq:featureOfTuple}). We have found these alternatives to have worse
empirical performance than the approach described above. We give details of
these experiments in \secref{sec:batch_learning_detail} of the
supplementary material.

\subsection{RANDOM FEATURE APPROXIMATIONS}
\label{sec:randomFeatureApproximations}

One approach to learning the mapping  $\approxMsg{\factor}{\outV}{\theta}$ from incoming to outgoing messages
would be to employ Gaussian process regression, using the kernel \eqref{eq:gauss_joint_emb}.
This approach is not suited to just-in-time (JIT) learning, however,
as both prediction and storage costs grow with the size of the training set;
thus, inference on even moderately sized datasets rapidly becomes computationally prohibitive.
Instead, we define 
a finite-dimensional random feature map $\hat{\psi} \in \mathbb{R}^{D_\mathrm{out}}$ such that 
$\kappa(\mathsf{r}, \mathsf{s}) \approx \hat{\psi}(\mathsf{r})^\top \hat{\psi}(\mathsf{s})$, 
and regress directly on these feature maps in the primal (see next section): storage and computation 
are then a function of the dimension of the feature map $D_\mathrm{out}$, yet performance is close to that
obtained using a kernel.



In \cite{Rahimi2007}, a method based on Fourier transforms was proposed for computing a vector
of random features $\hat{\varphi}$ for a translation invariant kernel $k(x,y) = k(x-y)$
 such that $k(x, y) \approx \hat{\varphi}(x)^\top \hat{\varphi}(y)$
where $x,y \in \mathbb{R}^d$ and $\hat{\varphi}(x), \hat{\varphi}(y) \in \mathbb{R}^{D_\mathrm{in}}$.  
This is possible because of Bochner's theorem \citep{Rudin2013}, which states that a
continuous, translation-invariant kernel $k$ can be written in the form of an
inverse Fourier transform:
\begin{equation*}
k(x-y)=\int\hat{k}(\omega)e^{j\omega^{\top}\left(x-y\right)}\,
d\omega,
\end{equation*}
where $j=\sqrt{-1}$ and the Fourier transform $\hat{k}$ of the kernel can be
treated as a distribution. The inverse Fourier transform can thus be seen as
an expectation of the complex exponential, which can be approximated with a
Monte Carlo average by drawing random frequencies from the Fourier transform. 
We will follow a similar approach, and derive 
a two-stage set of random Fourier features for \eqref{eq:gauss_joint_emb}.

We start by expanding the  exponent 
of \eqref{eq:gauss_joint_emb} as
\begin{align*}
  \exp\left(-\frac{1}{2\gamma^{2}}\left\langle \mu_{\mathsf{r}},\mu_{\mathsf{r}}\right\rangle +\frac{1}{\gamma^{2}}\left\langle \mu_{\mathsf{r}},\mu_{\mathsf{s}}\right\rangle -\frac{1}{2\gamma^{2}}\left\langle \mu_{\mathsf{s}},\mu_{\mathsf{s}}\right\rangle \right).
\end{align*}
Assume that the embedding kernel $k$ used to define the embeddings $\mu_\mathsf{r}$ 
and $\mu_\mathsf{s}$ is translation invariant. Since 
$\langle \mu_{\mathsf{r}},\mu_{\mathsf{s}}  \rangle
= \mathbb{E}_{x \sim \mathsf{r}} \mathbb{E}_{y \sim \mathsf{s}} k(x-y)$, one can use 
the result of \cite{Rahimi2007} to write
\begin{align*}
 \langle \mu_{\mathsf{r}},\mu_{\mathsf{s}}  \rangle
 & \approx \mathbb{E}_{x \sim \mathsf{r}} \mathbb{E}_{y \sim \mathsf{s}} 
   \hat{\varphi}(x)^\top \hat{\varphi}(y) \nonumber \\ 
 & = \mathbb{E}_{x \sim \mathsf{r}} 
   \hat{\varphi}(x)^\top \mathbb{E}_{y \sim \mathsf{s}}  \hat{\varphi}(y) 
 := \hat{\phi}(\mathsf{r})^\top \hat{\phi}(\mathsf{s}),
\end{align*}
where the mappings $\hat{\phi}$ are $D_\mathrm{in}$ standard Rahimi-Recht random features, shown in Steps 1-3 of Algorithm~\ref{algo:random_features_kgg}.

With the approximation of $\langle \mu_{\mathsf{r}},\mu_{\mathsf{s}}  \rangle$,
we have
\begin{equation}
\kappa(\mathsf{r}, \mathsf{s})\approx\exp\left(-\frac{\|\hat{\phi}(\mathsf{r})-\hat{\phi}(\mathsf{s})\|_{D_\mathrm{in}}^{2}}{2\gamma^{2}}\right),
%
\end{equation}
which is a standard Gaussian kernel on $\mathbb{R}^{D_\mathrm{in}}$.
We can thus further approximate this Gaussian kernel 
%
%
by the random Fourier features of \citeauthor{Rahimi2007}, to obtain a vector 
of random features $\hat{\psi}$ such that 
$\kappa(\mathsf{r}, \mathsf{s}) \approx \hat{\psi}(\mathsf{r})^\top \hat{\psi}(\mathsf{s})$
where $\hat{\psi}(\mathsf{r}), \hat{\psi}(\mathsf{s}) \in \mathbb{R}^{D_\mathrm{out}}$. 
Pseudocode for generating the random features $\hat{\psi}$ is given in
Algorithm~\ref{algo:random_features_kgg}. 
Note that the sine component in the complex exponential vanishes due to the
translation invariance property (analogous to an even function), i.e.,  only 
the cosine term remains.
We refer to \secref{sub:Expected-Product-Kernel} in the supplementary material for more details.

For the implementation, we need to pre-compute $\left\{ \omega_{i}\right\} _{i=1}^{D_\mathrm{in}},\left\{ b_{i}\right\} _{i=1}^{D_\mathrm{in}},\left\{ \nu_{i}\right\} _{i=1}^{D_\mathrm{out}}$
and $\left\{ c_{i}\right\} _{i=1}^{D_\mathrm{out}}$, where $D_\mathrm{in}$ and
$D_\mathrm{out}$ are the number of random features used. 
A more efficient way to support a large number of random features 
is to store only the random seed used to generate 
the features, and to generate the coefficients  on-the-fly as needed \citep{Dai2014}. 
In our implementation, we use a Gaussian kernel for $k$.


\begin{algorithm}[t]
\caption{Construction of two-stage random features for $\kappa$}
\label{algo:random_features_kgg}
\begin{algorithmic}[1]
\REQUIRE Input distribution $\mathsf{r}$, Fourier transform $\hat{k}$ of 
the embedding translation-invariant kernel $k$, number of inner features $D_\mathrm{in}$, number of outer features $D_\mathrm{out}$, outer Gaussian width $\gamma^2$.
\ENSURE Random features $\hat{\psi}(\mathsf{r}) \in \mathbb{R}^{D_\mathrm{out}}$. 

\STATE Sample  $\{ \omega_i \}_{i=1}^{D_\mathrm{in}} \overset{i.i.d}{\sim} \hat{k}$.
\STATE Sample $\{b_i\}_{i=1}^{D_\mathrm{in}} \overset{i.i.d}{\sim} \text{Uniform}[0, 2\pi] $.
\STATE $\hat{\phi}(\mathsf{r}) = \sqrt{\frac{2}{D_\mathrm{in}}} \left( \mathbb{E}_{x \sim \mathsf{r}} 
\cos(\omega_{i}^{\top}x+b_{i} ) \right)_{i=1}^{D_\mathrm{in}} \in \mathbb{R}^{D_\mathrm{in}}$ \\
If $\mathsf{r}(x)=\mathcal{N}(x;m, \Sigma )$, 
\small
\begin{equation*}
\hat{\phi}( \mathsf{r}) = \sqrt{\frac{2}{D_\mathrm{in}}} \left( \cos(\omega_{i}^{\top}m +b_{i}) \exp 
\left(-\frac{1}{2}\omega_{i}^{\top}\Sigma \omega_{i} \right) \right)_{i=1}^{D_\mathrm{in}}.
\end{equation*}
%
\STATE Sample $\{ \nu_i \}_{i=1}^{D_\mathrm{out}} \overset{i.i.d}{\sim} \hat{k}_{\text{gauss}}(\gamma^{2})$
i.e., Fourier transform of a Gaussian kernel with width $\gamma^2$.
\STATE Sample $\{c_i\}_{i=1}^{D_\mathrm{out}} \overset{i.i.d}{\sim} \text{Uniform}[0, 2\pi] $.
\STATE $\hat{\psi}(\mathsf{r}) = \sqrt{\frac{2}{D_\mathrm{out}}} \left(  
\cos(\nu_{i}^{\top} \hat{\phi}(\mathsf{r}) + c_{i} ) \right)_{i=1}^{D_\mathrm{out}} \in 
\mathbb{R}^{D_\mathrm{out}}$
\end{algorithmic}
\end{algorithm}

\subsection{REGRESSION FOR OPERATOR PREDICTION}\label{sec:ridgeRegression}

Let $\mathsf{X}=\left(\mathsf{x}_{1}|\cdots|\mathsf{x}_{N}\right)$
be the $N$ training samples of incoming messages to a factor node, and let
$\mathsf{Y}=\left(\mathbb{E}_{x_V \sim q_{\factor\rightarrow
\outV}^{1}}u(x_{\outV})|\cdots|\mathbb{E}_{x_V \sim q_{f\rightarrow
\outV}^{N}}u(x_{\outV})\right)\in\mathbb{R}^{D_{y}\times N}$
be the expected sufficient statistics of the corresponding  output messages, 
where $q^i_{\factor \rightarrow \outV}$ is the numerator 
of \eqref{eq:msgPassing:EP}.
We write $\mathsf{x}_{i}= \hat{\psi}(\mathsf{r}_i)$
as a more compact notation for the random feature
vector representing the $i^{th}$ training tuple of incoming messages,
as computed via Algorithm~\ref{algo:random_features_kgg}. 


Since we require uncertainty estimates on our predictions,
we perform Bayesian linear regression from the random features to the output messages,
which yields predictions close to those obtained by Gaussian process regression
with the kernel in \eqref{eq:gauss_joint_emb}.
The uncertainty estimate in this case will be the predictive 
variance.
We assume prior and likelihood
\begin{align}
w & \sim\mathcal{N}\left(w;0,I_{D_\mathrm{out}}\sigma_{0}^{2}\right), \\
\mathsf{Y} \mid \mathsf{X},w & \sim\mathcal{N}\left(\mathsf{Y};w^{\top} \mathsf{X},\sigma_{y}^{2}I_{N}\right),
\end{align}
where the output noise variance $\sigma_{y}^{2}$ captures the intrinsic
stochasticity of the importance sampler used to generate $\mathsf{Y}$. It
follows that the posterior of $w$ is given by \citep{Bishop2006}
\begin{align}
p(w | \mathsf{Y}) & =\mathcal{N}(w;\mu_{w},\Sigma_{w}), \\
\Sigma_{w} & = \left( \mathsf{X} \mathsf{X}^{\top}\sigma_{y}^{-2}+\sigma_{0}^{-2}I \right)^{-1}, \\
\mu_{w} & =\Sigma_{w} \mathsf{X} \mathsf{Y}^{\top}\sigma_{y}^{-2}.
\end{align}
The predictive distribution on the output $\mathsf{y}^{*}$ given an 
observation $\mathsf{x}^{*}$ is
\begin{align}
p(\mathsf{y}^{*}| \mathsf{x}^{*}, \mathsf{Y}) & =\int  
 p(\mathsf{y}^{*}|w, \mathsf{x}^{*}, \mathsf{Y}) p(w|\mathsf{Y}) \, dw\\
 & =\mathcal{N}\left(\mathsf{y}^{*}; \mathsf{x}^{*\top}\mu_{w}, \mathsf{x}^{*\top}\Sigma_{w} \mathsf{x}^{*}+\sigma_{y}^{2}\right).
\end{align}
%
%
For simplicity, we treat each output (expected sufficient statistic) as a separate regression problem. 
Treating all outputs jointly can be achieved with a multi-output kernel \citep{Alvarez2011}.

 
\paragraph{Online Update}
We describe an online update for $\Sigma_{w}$ and
$\mu_{w}$ when observations (i.e., random features representing incoming 
messages) $\mathsf{x}_i$ arrive sequentially. We use $\cdot^{(N)}$
to denote a quantity constructed from $N$ samples. Recall that $\Sigma_{w}^{-1(N)}= \mathsf{X} \mathsf{X}^{\top}\sigma_{y}^{-2}+\sigma_{0}^{-2}I$.
The posterior covariance matrix at time $N+1$ is
\begin{equation}
\Sigma_{w}^{(N+1)} 
 =
\Sigma_{w}^{(N)}-\frac{\Sigma_{w}^{(N)} \mathsf{x}_{N+1} \mathsf{x}_{N+1}^{\top} \Sigma_{w}^{(N)}\sigma_{y}^{-2}}{1+ \mathsf{x}_{N+1}^{\top}\Sigma_{w}^{(N)} \mathsf{x}_{N+1}\sigma_{y}^{-2}},
\end{equation}

meaning it can be expressed
as an inexpensive update of
the covariance at time $N$.
Updating $\Sigma_{w}$ for all the $D_y$ outputs costs 
$O( (D_\mathrm{in}D_\mathrm{out} + D_\mathrm{out}^{2}) D_y)$ 
per new observation. 
For $\mu_{w}= \Sigma_{w} \mathsf{X} \mathsf{Y}^{\top}\sigma_{y}^{-2}$, we maintain
$ \mathsf{X} \mathsf{Y}^{\top}\in\mathbb{R}^{D_{\mathrm{out}}\times D_{\mathrm{y}}}$, and update it
at cost $O(D_\mathrm{in}D_\mathrm{out}D_y)$ as
\begin{equation}
\left( \feaX \feaY^{\top}\right)^{(N+1)}=\left( \feaX \feaY^{\top}+ \feax_{N+1} \feay^\top_{N+1}\right).
\end{equation}
%
%
Since we have $D_{y}$ regression functions, 
for each tuple of incoming messages $\feax^{*}$, there are $D_{y}$
predictive variances, $v_{1}^{*},\ldots,v_{D_{y}}^{*}$, one for each
output. 
Let $\{\tau_{i}\}_{i=1}^{D_{y}}$ be pre-specified predictive variance thresholds.
Given a new input $\feax^{*}$, if $v_{1}^{*}>\tau_{1}$ or $\cdots$
or $v_{D_{y}}^{*}>\tau_{D_{y}}$ (the operator is uncertain), 
a query is made to the oracle to obtain a ground truth $\feay^{*}$. 
The pair $(\feax^{*}, \feay^{*})$ is then
used to update $\Sigma_{w}$ and $\mu_{w}$.


\section{EXPERIMENTS  }
\label{sec:Experiments}

We evaluate our learned message operator using two different factors: the logistic factor, and the compound gamma factor. In the first and second experiment we demonstrate that the proposed operator is capable of learning high-quality mappings from incoming to outgoing messages, and that the associated uncertainty estimates are  reliable. The third and fourth experiments assess the performance of the operator as part of the full EP inference loop in two different models: approximating the logistic, and the compound gamma factor. Our final experiment demonstrates the  ability of our learning process to reliably and quickly adapt to large shifts in the message distribution, as encountered during inference in a sequence of several real-world regression problems.

For all experiments we used Infer.NET \citep{Minka2014} with its extensible factor interface for our own operator. 
We used the default settings of Infer.NET unless stated otherwise. 
The regression target is the marginal belief (numerator of \eqref{eq:msgPassing:EP}) in experiment 1,2,3 and 5. We set the regression target to the outgoing message in experiment 4. 
Given a marginal belief, the outgoing message can be calculated straightforwardly.


\begin{figure}[ht]
\centering
\includegraphics[width=0.8\columnwidth]{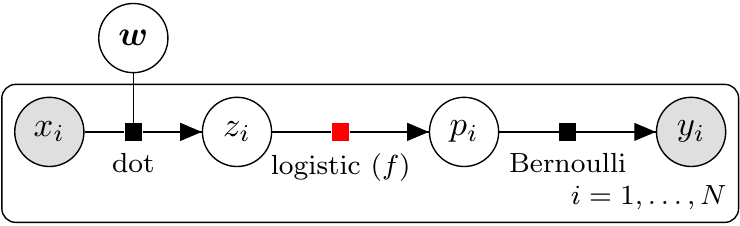}
%
%
\caption{Factor graph for binary logistic regression. 
The kernel-based message operator learns to approximate the logistic factor 
highlighted in red. The two incoming messages are 
$\msg{z_i}{\factor} = \mathcal{N}(z_i; \mu, \sigma^2)$ and 
$\msg{p_i}{\factor} = \text{Beta}(p_i; \alpha, \beta) $. 
}
\label{fig:factor_graph_binlog}
\end{figure}

\paragraph{Experiment 1: Batch Learning} 
As in \citep{Heess2013,Eslami2014}, we study the logistic factor 
$\factor(p|z)=\delta\left(p-\frac{1}{1+\exp(-z)}\right),$ 
%
%
where $\delta$ is the Dirac delta function, in the context of 
a binary logistic regression model  (\figref{fig:factor_graph_binlog}).
The factor is deterministic and there are two incoming messages: 
$\msg{p_i}{\factor} = \text{Beta}(p_i; \alpha, \beta) $ and 
$\msg{z_i}{\factor} = \mathcal{N}(z_i; \mu, \sigma^2)$, 
where $z_i = \boldsymbol{w}^\top x_i$ represents the dot product between an observation 
$x_i \in \mathbb{R}^d$ and the coefficient vector $\boldsymbol{w}$ whose posterior is 
to be inferred.

In this first experiment we simply learn a kernel-based operator to send the message $\msg{\factor}{z_i}$.
Following \cite{Eslami2014}, we set $d$ to 20, and generated 20 different datasets, sampling 
a different $\boldsymbol{w} \sim \mathcal{N}(0, I)$ and then a set of $\{(x_i, y_i)\}_{i=1}^n$ ($n=300$) observations according to the model.
For each dataset we ran EP for 10 iterations, and collected incoming-outgoing message pairs in 
the first five iterations of EP from Infer.NET's implementation of the 
logistic factor.
We partitioned the messages randomly into 5,000 training and 
3,000 test messages, and learned a message operator to predict $m_{f\rightarrow z_i}$
as described in \secref{sec:Online}. 
Regularization and kernel parameters were chosen by leave-one-out cross validation.
We set the number of random features to $D_{in}=500$ and $D_{out}=1,000$; 
empirically, we observed no significant improvements beyond 1,000 random features.

\begin{figure}[ht]
  \centering

  \subfloat[KL errors \label{fig:kl_div_hist}]{
  \includegraphics[width=0.46\columnwidth]{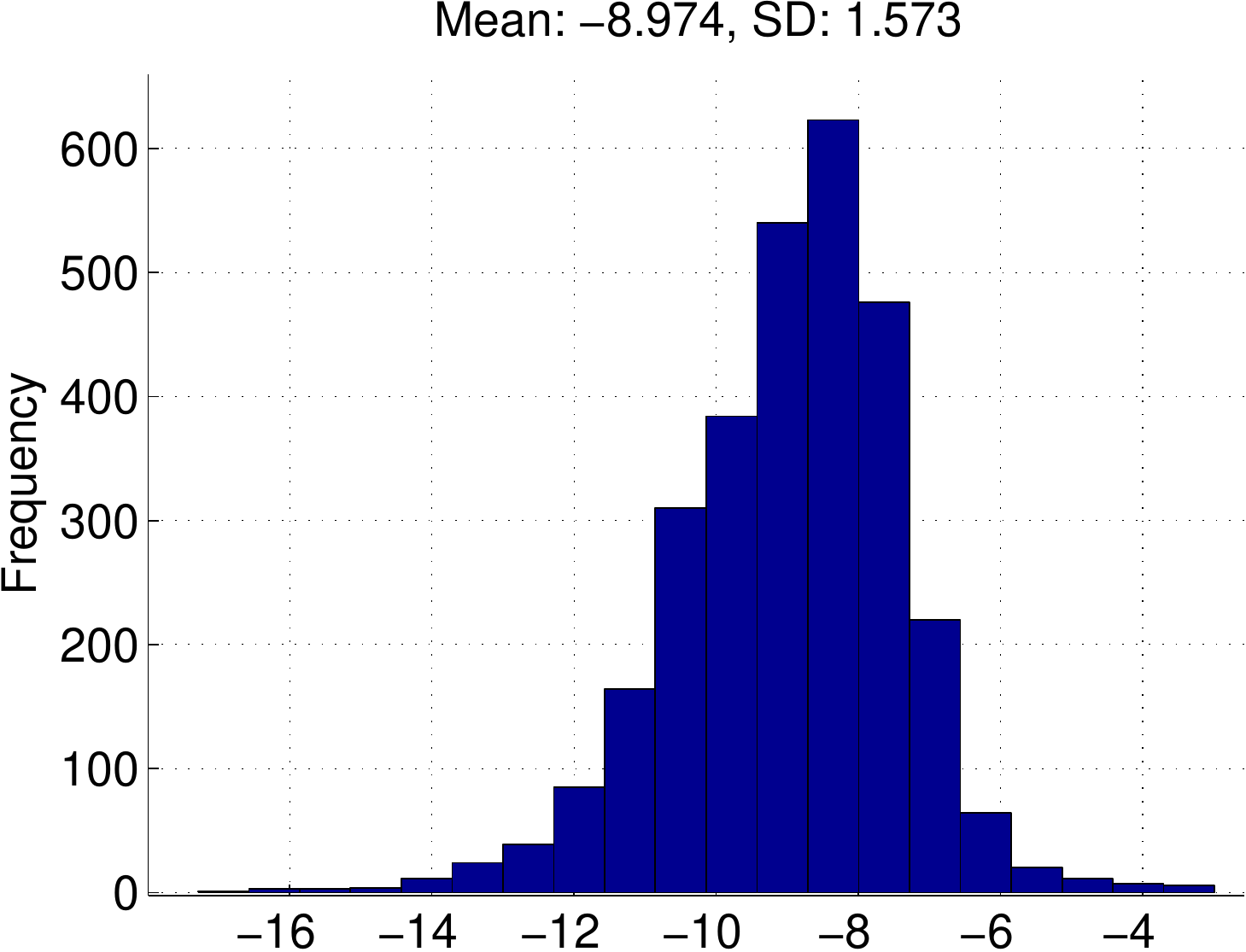}
  }
  \subfloat[Examples of predictions\label{fig:kl_div_4plots}]{
  \includegraphics[width=0.47\columnwidth]{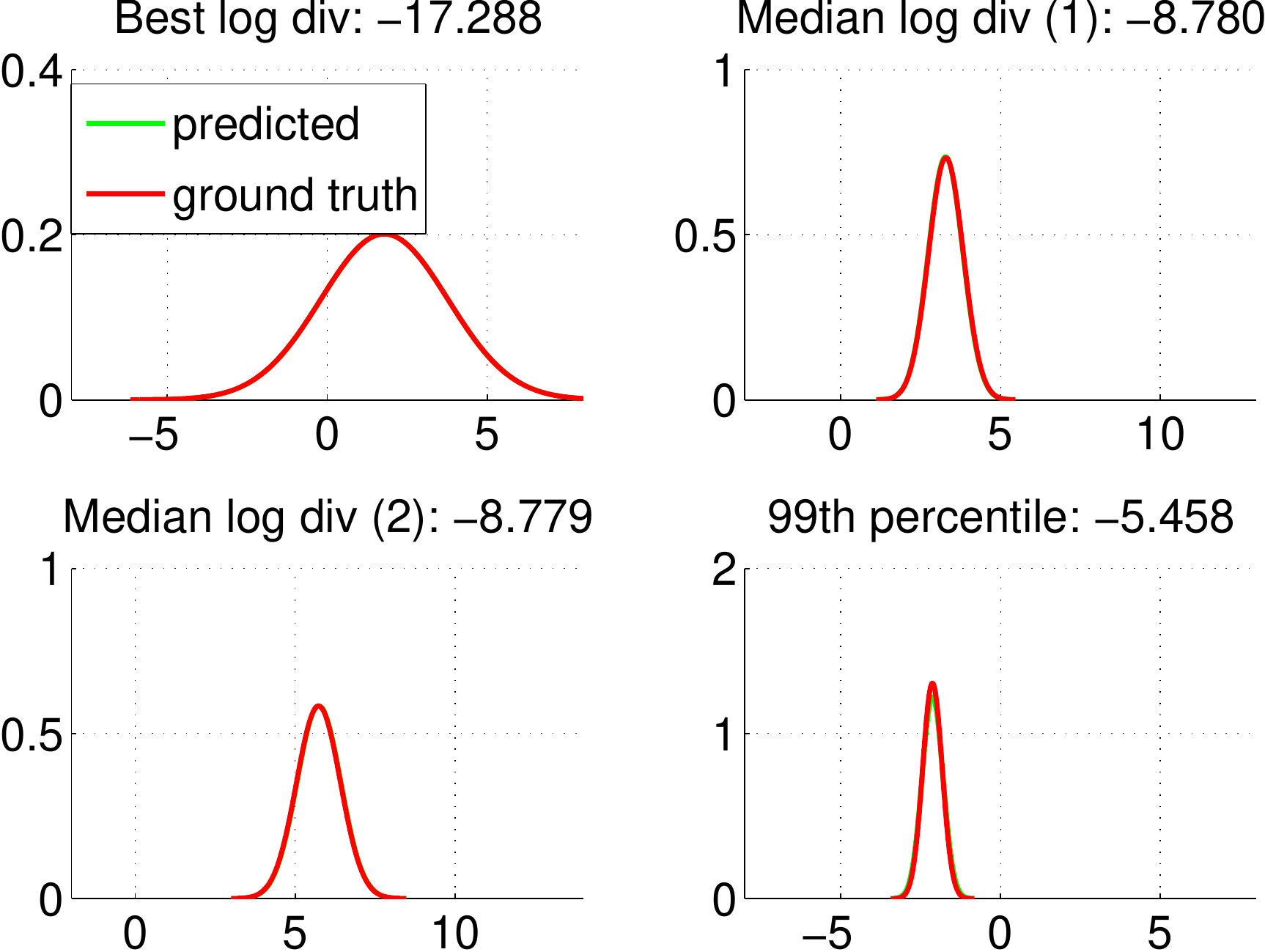}
  }
  \caption{Prediction errors for predicting the projected beliefs to $z_i$, and examples of predicted messages at different error levels. 
  }
  \label{fig:kl_div}
\end{figure}

We report 
$\log \mathrm{KL}[q_{\factor \rightarrow z_i} \| \hat{q}_{\factor \rightarrow z_i}]$ 
where $q_{\factor \rightarrow z_i}$ is the ground truth projected belief (numerator of  \eqref{eq:msgPassing:EP}) and 
$\hat{q}_{\factor \rightarrow z_i}$ is the prediction.
The histogram of the log KL errors is shown in \figref{fig:kl_div_hist}; \figref{fig:kl_div_4plots} shows examples of predicted messages 
for different log KL errors. 
It is evident that the kernel-based operator does well in capturing the relationship
between incoming and outgoing messages. The discrepancy with respect to the ground truth is barely 
visible even at the 99th percentile. 
See \secref{sec:batch_learning_detail} in
the supplementary material for a comparison with other methods.

\paragraph{Experiment 2: Uncertainty Estimates}

\begin{figure}[t]
\centering
  \subfloat[Parameters of $\msg{z_i}{\factor}$ \label{fig:logistic_uncertainty_test}]{
  \includegraphics[width=0.49\columnwidth]{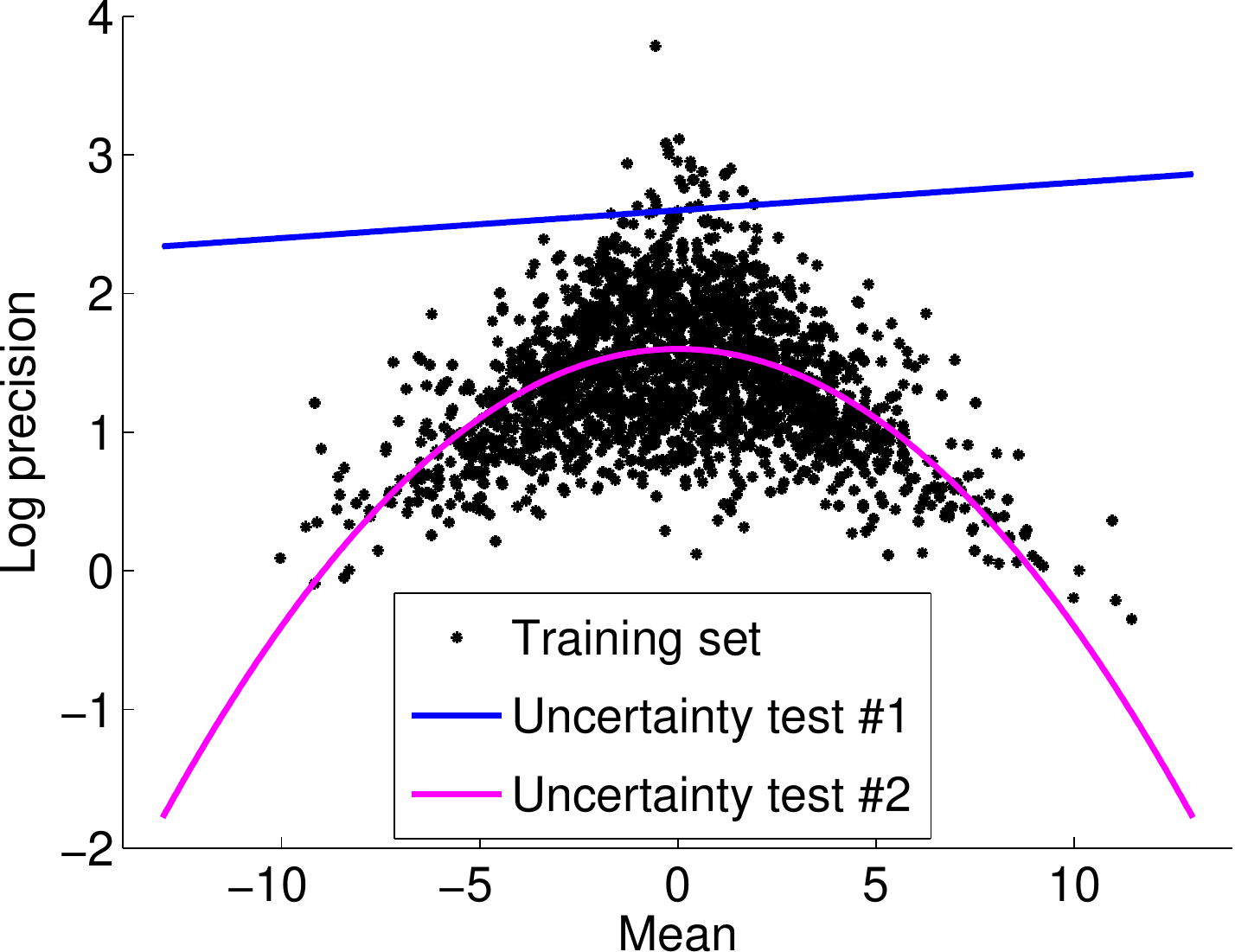}
  }
  %
  \subfloat[Uncertainty estimates \label{fig:logistic_uncertainty_all}]{
  \includegraphics[width=0.49 \columnwidth]{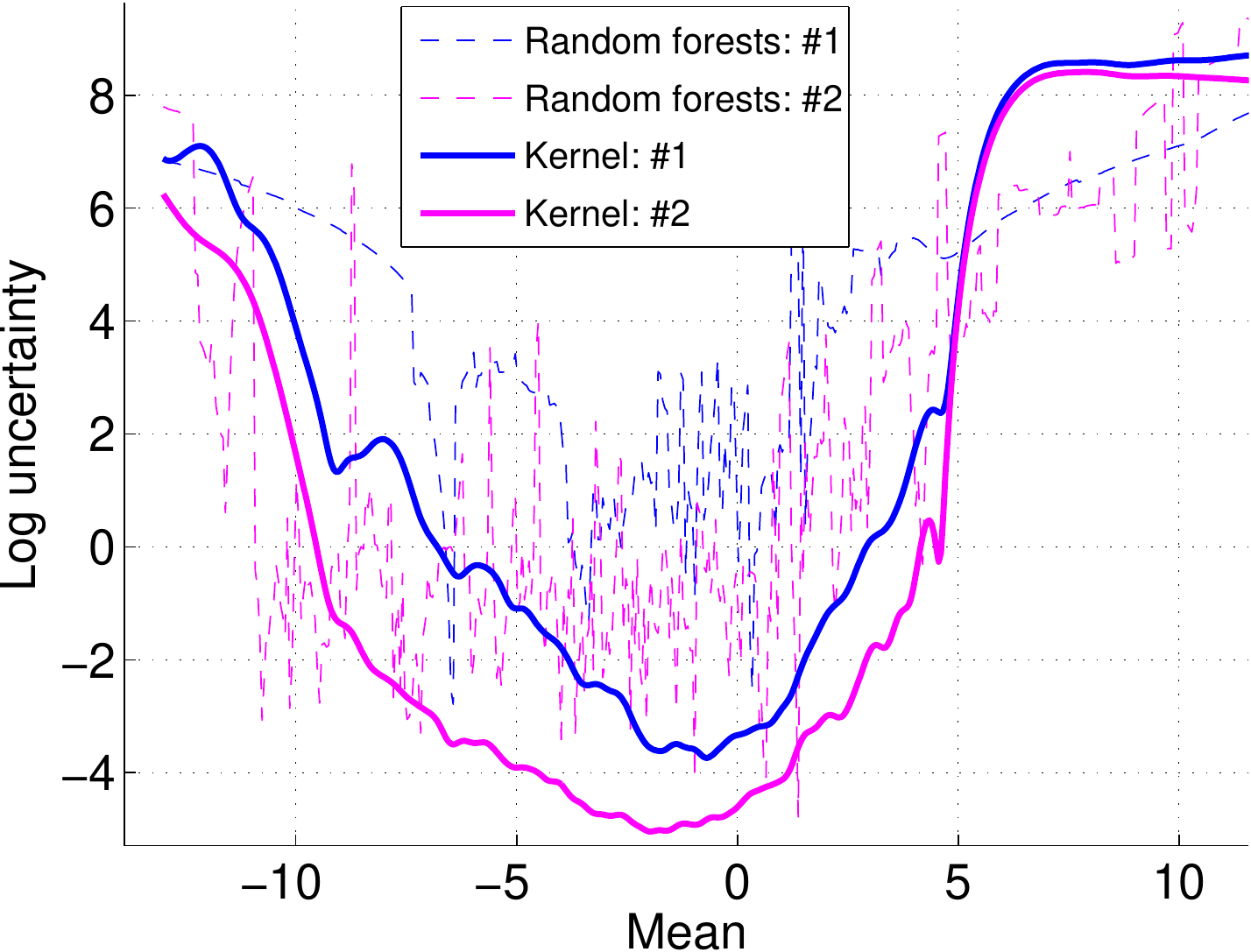}
  }
  %
  \caption{ (a) Incoming messages from $z$ to $\factor$ from 20 EP runs of binary 
  logistic regression, as shown in \figref{fig:factor_graph_binlog}. 
  (b) Uncertainty estimates of the proposed kernel-based method (predictive variance) and 
  \citeauthor{Eslami2014}'s random forests (KL-based agreement of predictions of different trees) 
  on the two uncertainty test sets shown. For testing, we fix the other incoming message 
  $\msg{p_i}{\factor}$ to $\text{Beta}(p_i; 1, 2)$.
  }
  \label{fig:logistic_uncertainty}
\end{figure}

For the approximate message operator to perform well in a JIT learning setting, it is crucial to have reliable estimates 
of operator's predictive uncertainty in different parts of the space of incoming messages.
To assess this property  we compute the predictive variance using the same learned operator as used in 
\figref{fig:kl_div}. The forward incoming messages $\msg{z_i}{\factor}$ in the previously 
used training set are shown in \figref{fig:logistic_uncertainty_test}. 
The backward incoming messages $\msg{p_i}{\factor}$ are not displayed.
Shown in the same plot are two curves (a blue line, and a pink parabola) representing two ``uncertainty test sets'':
these are the sets of parameter pairs on which we wish to evaluate
the uncertainty of the predictor, and pass through regions
with both high and low densities of training samples.
\figref{fig:logistic_uncertainty_all}
shows uncertainty estimates
of our
kernel-based operator and of
random forests,
where we fix 
$\msg{p_i}{\factor} := \text{Beta}(p_i; 1, 2)$ for testing. 
The implementation of the random forests closely
follows 
\cite{Eslami2014}. 

From the figure, as the mean of the test message moves away from the region densely sampled by the training data, the predictive variance reported by the kernel method increases
much more smoothly than that of the random forests. Further, our method
clearly exhibits a higher uncertainty on the test set \#1 than on the test set \#2.
This behaviour is desirable, as most of the points in  test set \#1 are either 
in a low density region or an unexplored region. These results suggest that the 
predictive variance is a robust criterion for querying the importance sampling oracle.
One key observation is that the uncertainty estimates of the random 
forests are highly non-smooth; i.e., uncertainty on nearby points may vary wildly.
As a result, a random forest-based JIT learner may still query the importance 
sampler oracle when 
presented with incoming messages similar to those in the training set, 
thereby wasting computation.

We have further checked that the predictive uncertainty of the regression function is a
reliable indication of the error in KL divergence of the predicted outgoing messages. These results
are given in Figure \ref{fig:logistic_predvar_g} of Appendix \ref{sec:batch_learning_detail}.


\paragraph{Experiment 3: Just-In-Time Learning}
In this experiment we test the approximate operator in the logistic regression model as part of the full EP inference loop in a just-in-time learning setting (KJIT). 
We now learn two kernel-based message operators, one for each outgoing 
direction from the logistic factor. 
The data generation is the same as in the batch learning experiment.
We sequentially presented the operator with 30 related problems, where a new 
set of observations $\{(x_i, y_i)\}_{i=1}^n$ was generated at the beginning of 
each problem from the model, while keeping $\boldsymbol{w}$ fixed.
This scenario is common in practice: one is often given several sets of 
observations which share the same model parameter \citep{Eslami2014}. 
As before, the inference target was $p(\boldsymbol{w}|\{(x_i, y_i)\}_{i=1}^n)$.
We set the maximum number of EP iterations to 10 in each problem.

\begin{figure*}[t]
\centering
\includegraphics[width=0.95\textwidth]{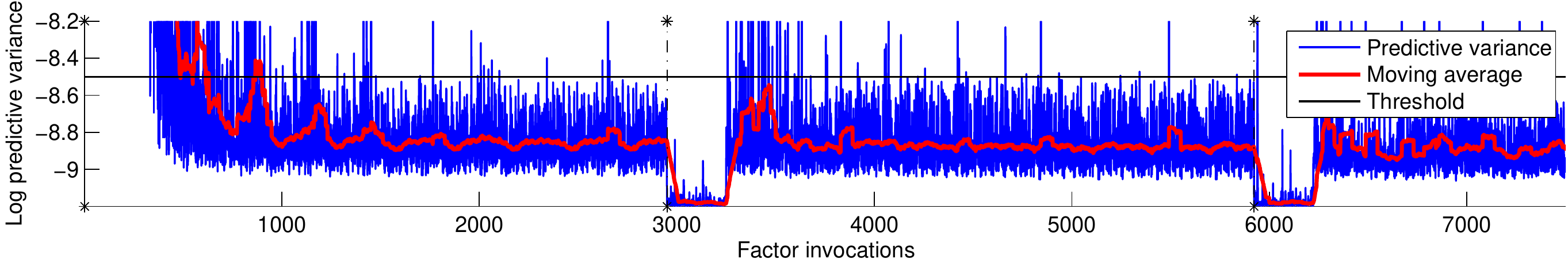}
\caption{Uncertainty estimate of KJIT in its prediction of outgoing messages at each factor invocation,
for the binary logistic regression problem. The black dashed lines indicate the start 
of a new inference problem.
\label{fig:logistic_temporal_uncertainty}
}
\end{figure*}


We employed a ``mini-batch'' learning approach in which the operator always consults the oracle in
the first few hundred factor invocations for initial batch training. 
In principle, during the initial batch training, the operator can perform 
cross validation or type-II maximum likelihood estimation for parameter
selection; however for computational simplicity
we set the kernel parameters  according to the median heuristic
\citep{Scholkopf2002}. Full detail of the heuristic is given in
\secref{sec:median_heuristic} in the supplementary material. 
The numbers 
of random features were $D_\mathrm{in} = 300$ and $D_\mathrm{out} = 500$. The output noise variance 
$\sigma^2_y$ was fixed to $10^{-4}$ and the uncertainty threshold on the log 
predictive variance was set to -8.5. To simulate a black-box setup, we used
an importance sampler as the oracle rather than Infer.NET's factor implementation, 
where the proposal distribution was fixed to $\mathcal{N}(z; 0, 200)$ with 
$5 \times 10^5$ particles.

\figref{fig:logistic_temporal_uncertainty} shows a trace of the predictive variance 
of KJIT in predicting the mean of each $\msg{\factor}{z_i}$ upon each factor invocation. 
The black dashed lines indicate the start of a new inference problem. 
Since the first 300 factor invocations are for the initial training, 
no uncertainty estimate is shown. From the trace, we observe that the uncertainty 
rapidly drops down to a stable point at roughly -8.8 and levels 
off after the operator sees about 1,000 incoming-outgoing message pairs, 
which is relatively low compared to approximately 3,000 message passings 
(i.e., 10 iterations $\times$ 300 observations) required for one problem. 
The uncertainty trace displays a periodic structure, repeating itself in 
every 300 factor invocations, corresponding to a full sweep over all 300 
observations to collect incoming messages $\msg{z_i}{\factor}$. 
The abrupt drop in uncertainty in the first EP iteration of each new problem is 
due to the fact that Infer.NET's inference engine initializes the message from 
$\boldsymbol{w}$ to have zero mean, leading to $\msg{z_i}{\factor}$ also having a 
zero mean. Repeated encounters of such a zero mean incoming message reinforce the 
operator's confidence; hence the drop in uncertainty. 

\begin{figure}[t]
  \centering
  \subfloat[Binary classification error\label{fig:logistic_01_loss}]{
  \includegraphics[width=0.49\columnwidth]{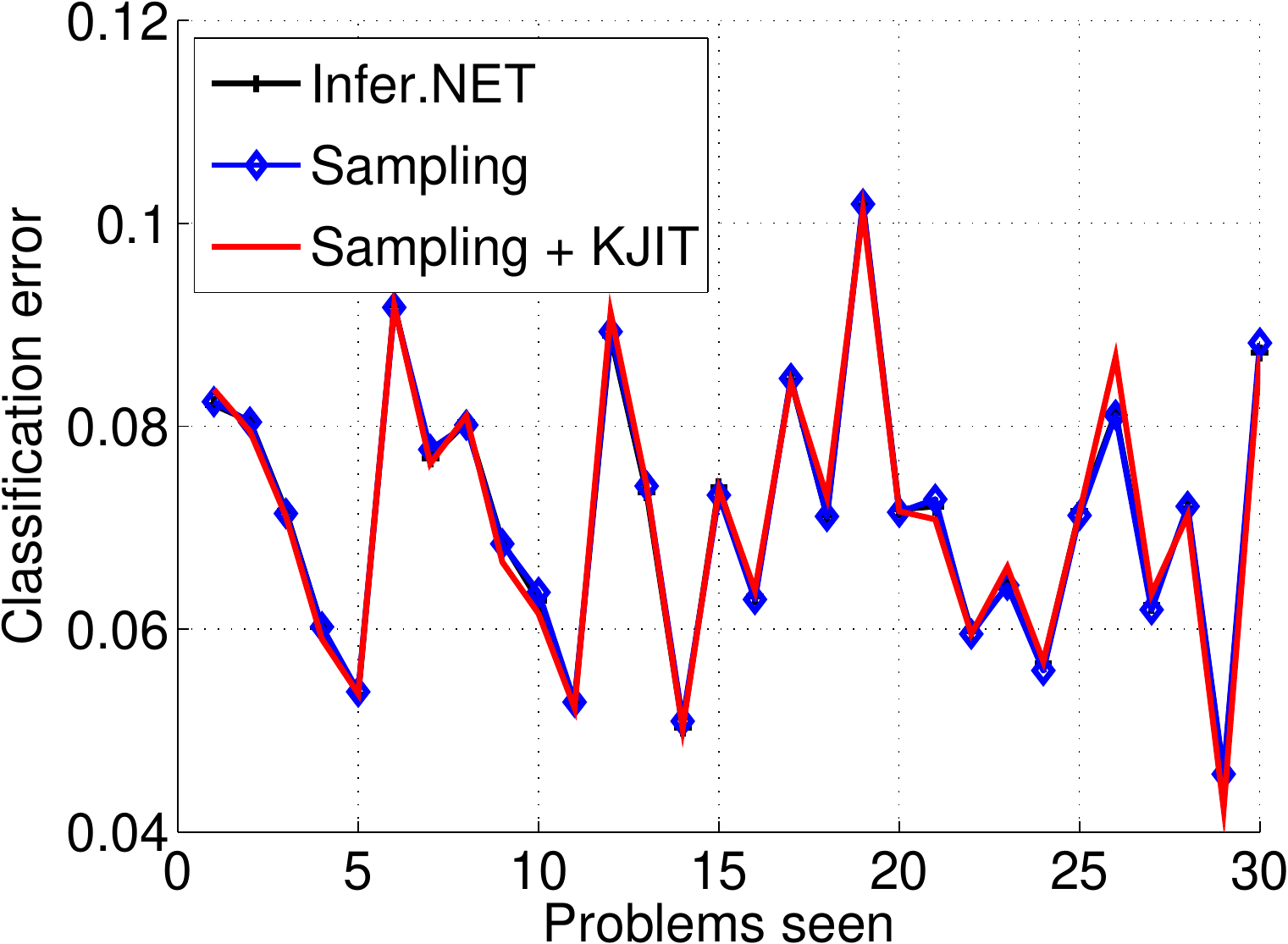}
  }
  \subfloat[Inference time\label{fig:logistic_inference_time}]{
  \includegraphics[width=0.47\columnwidth]{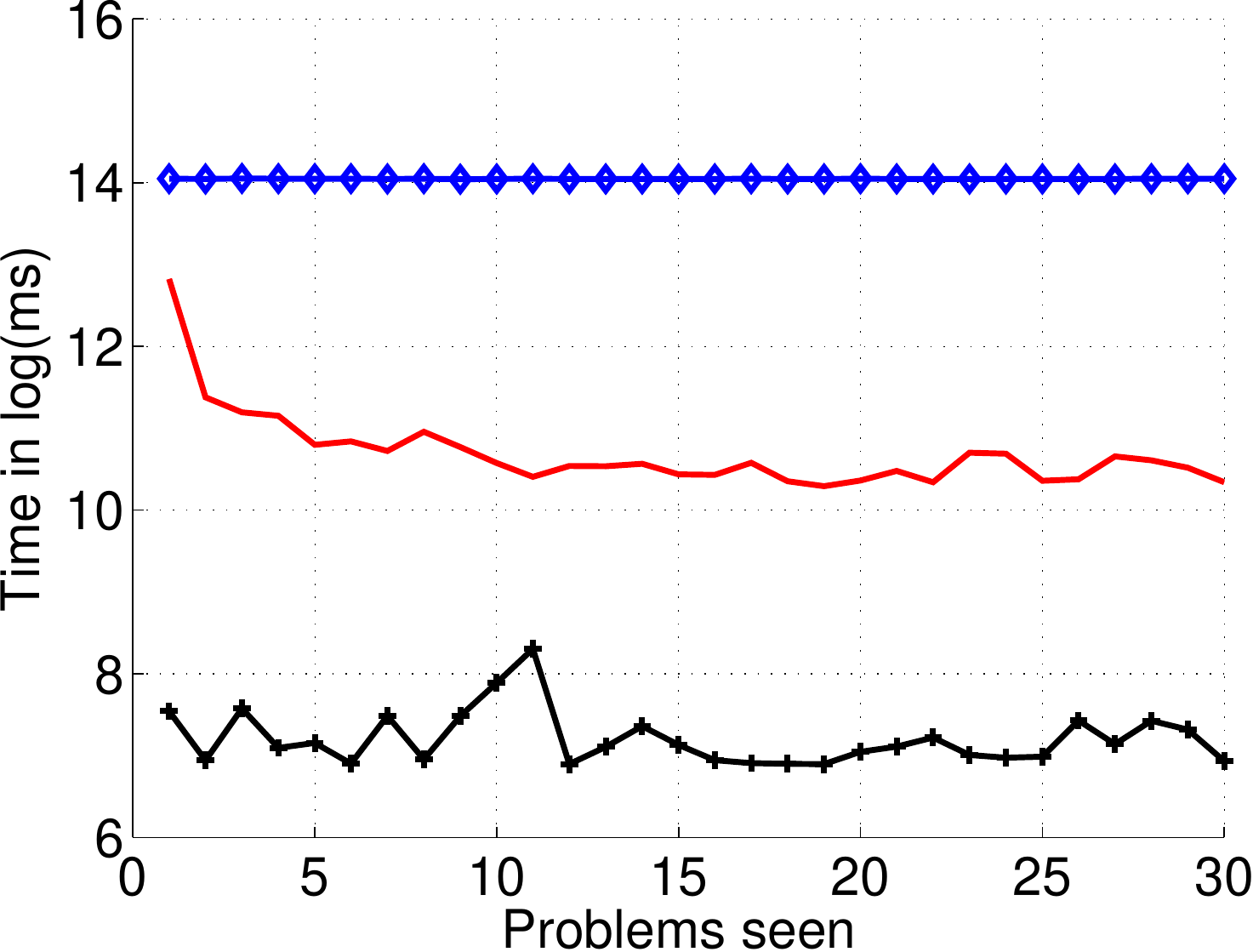}
  }
  \caption{Classification performance and inference times of all methods in the 
  binary logistic regression problem. 
  }
  \label{fig:logistic_performance}
\end{figure}

\figref{fig:logistic_01_loss} shows binary classification errors obtained by 
using the inferred posterior mean of $\boldsymbol{w}$ on a test set of size 
10000 generated from the true underlying parameter. 
Included in the plot are the errors obtained by using only the importance 
sampler for inference (``Sampling''), and using the Infer.NET's 
hand-crafted logistic factor. The loss of KJIT matches well with that of 
the importance sampler and Infer.NET, suggesting that the inference accuracy
is as good as these alternatives. \figref{fig:logistic_inference_time} 
shows the inference time required by all methods in each problem. 
While the inference quality is equally good, KJIT is orders of magnitude faster 
than the importance sampler.

%

\begin{figure}[ht]
  \centering
  %
  \subfloat[Inferred shape \label{fig:cg_infer_shape}]{
  \includegraphics[width=0.33\columnwidth]{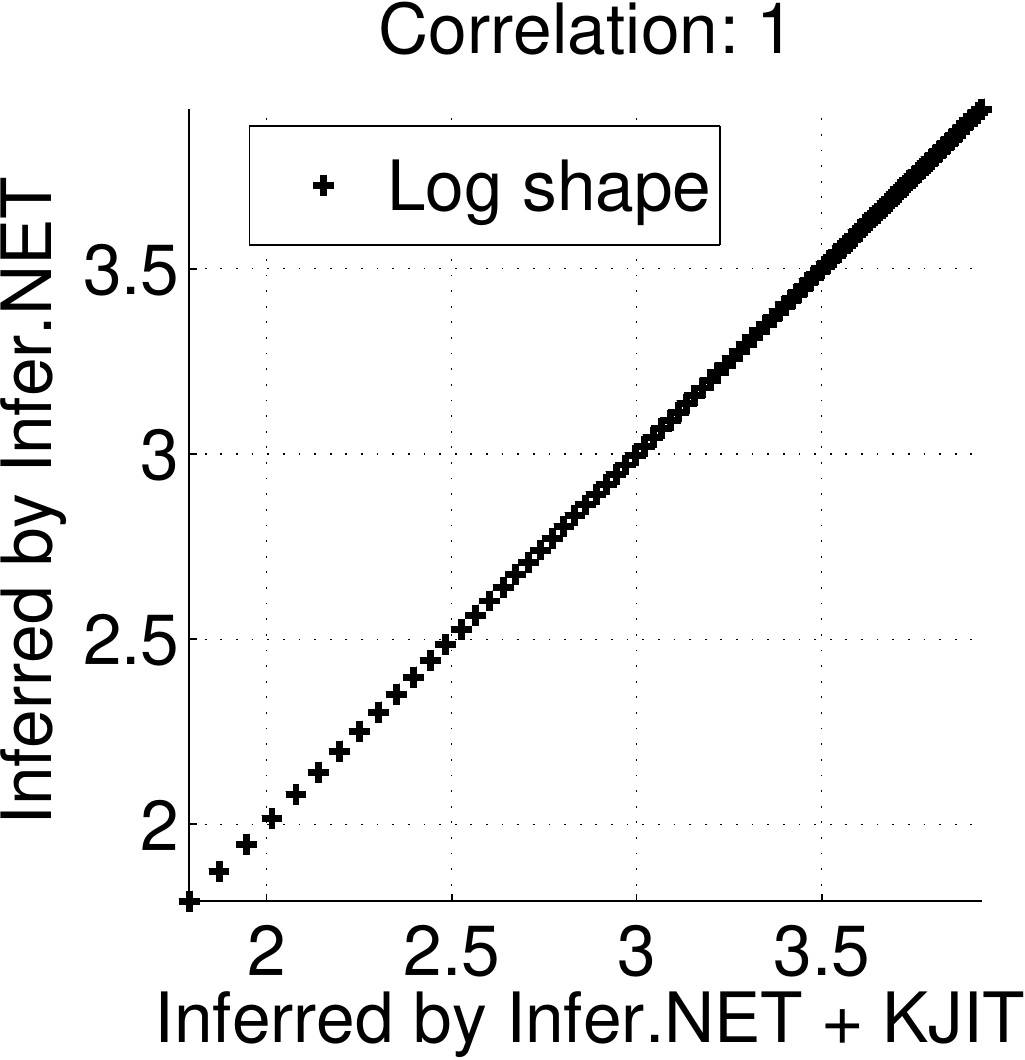}
  }
  \subfloat[Inferred rate \label{fig:cg_infer_rate}]{
  \includegraphics[width=0.31\columnwidth ]{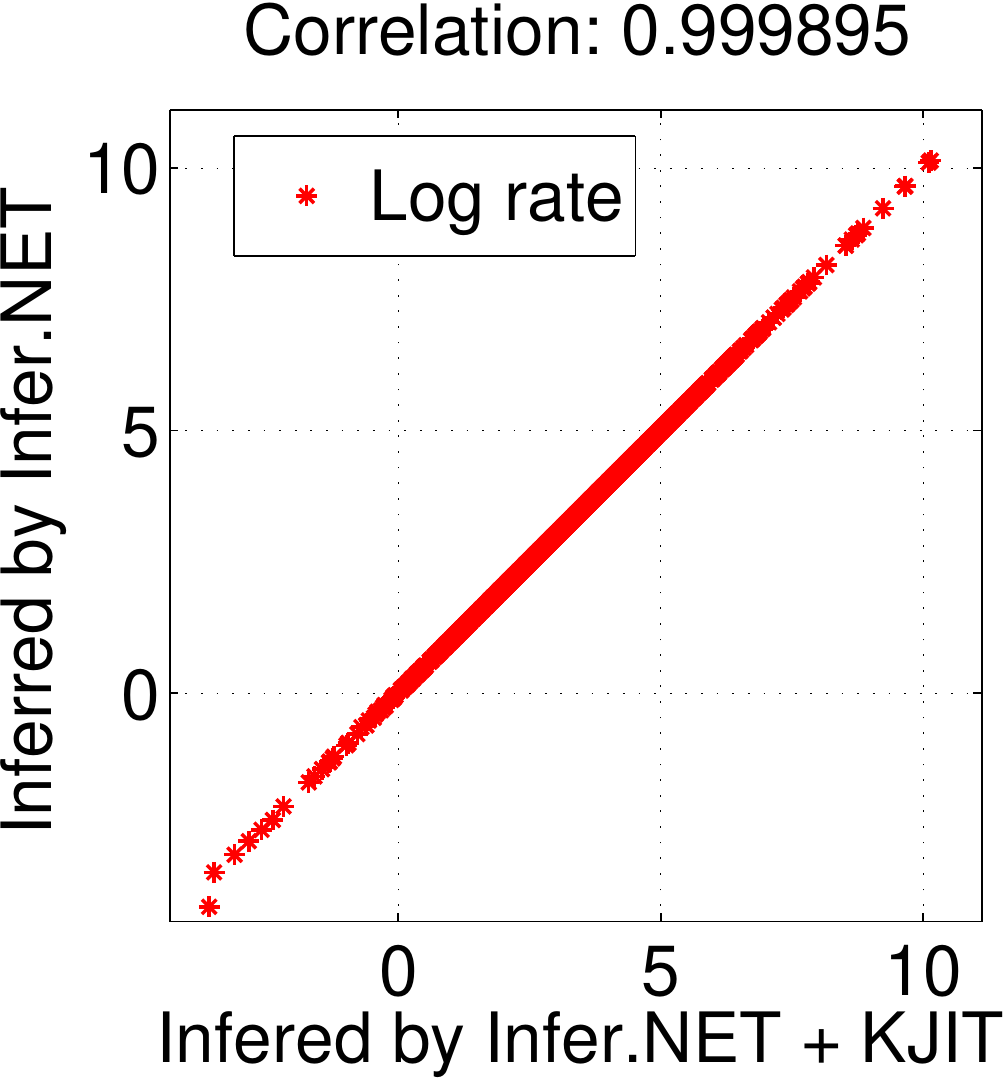}
  }
  \subfloat[Inference time\label{fig:cg_infer_time}]{
  \includegraphics[width=0.33\columnwidth]{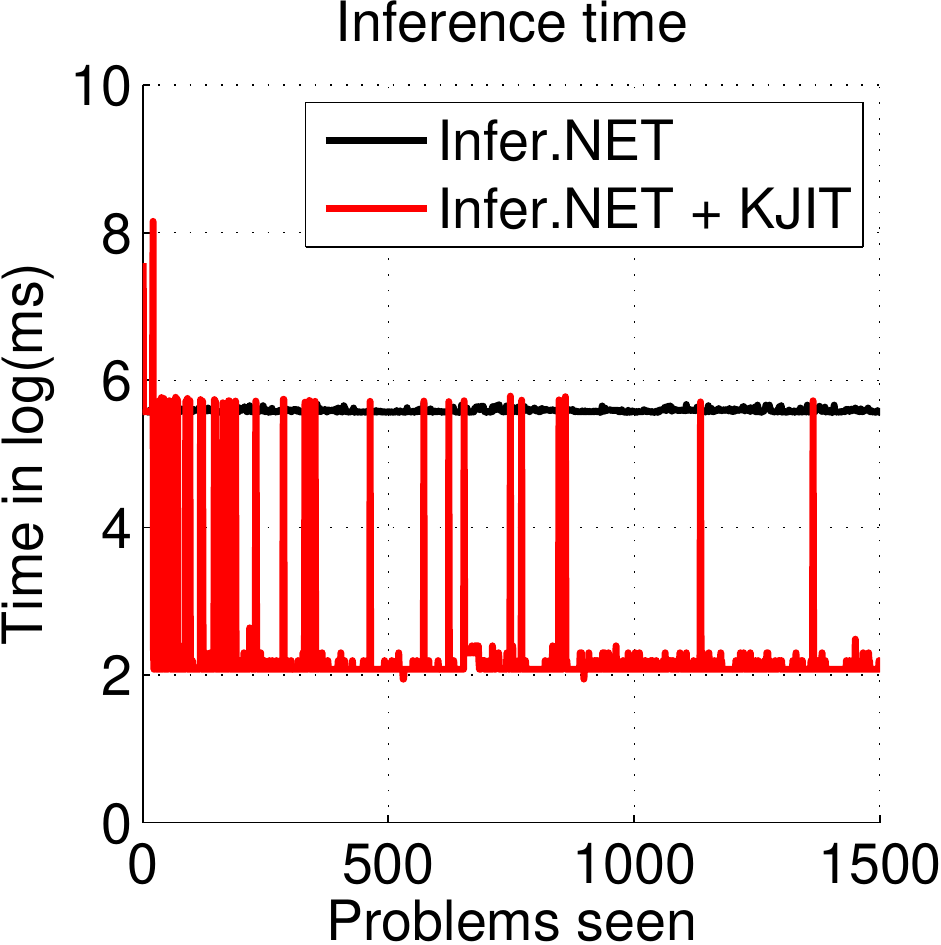}
  }
  \caption{Shape (a) and rate (b) parameters of the inferred posteriors in 
  the compound gamma problem. 
  (c) KJIT is able to infer equally good posterior parameters compared to Infer.NET, 
  while requiring a runtime several orders of magnitude lower. }
  \label{fig:cg_performance}
\end{figure}

\begin{figure*}[t]
\centering
\includegraphics[width=0.95\textwidth]{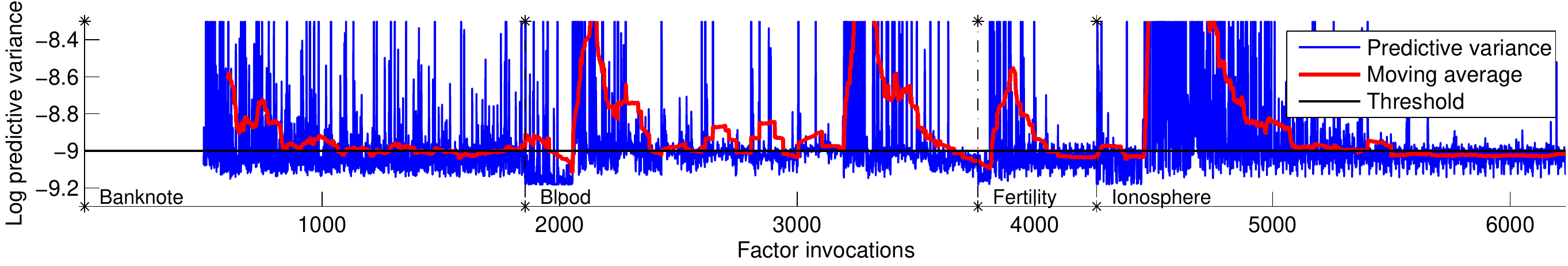}
\caption{
Uncertainty estimate of KJIT for outgoing messages on the four UCI datasets.
\label{fig:uci_temporal_uncertainty}
}
\end{figure*}

\paragraph{Experiment 4: Compound Gamma Factor} We next simulate the compound gamma factor, 
 a heavy-tailed prior distribution on the precision of a Gaussian random variable.
A variable $\tau$ is said to follow the compound gamma distribution 
if $\tau \sim \text{Gamma}(\tau; s_2, r_2)$ (shape-rate parameterization) and 
$r_2 \sim \text{Gamma}(r_2; s_1, r_1)$ where $(s_1, r_1, s_2)$ are parameters. 
The task we consider is to infer the posterior of the precision $\tau$ of a normally 
distributed variable $x \sim \mathcal{N}(x; 0, \tau)$ given realizations 
$\{x_i\}_{i=1}^n$. We consider the setting $(s_1, r_1, s_2) = (1, 1, 1)$ which was 
used in \cite{Heess2013}. Infer.NET's implementation requires two gamma factors
to specify the compound gamma. Here, we collapse them into one factor 
and let the operator learn to directly send an outgoing message $\msg{\factor}{\tau}$ 
given $\msg{\tau}{\factor}$, using Infer.NET as the oracle. 
The  default implementation of Infer.NET relies on a quadrature method.
As in \cite{Eslami2014}, we sequentially presented a number of 
problems to our algorithm, where at the beginning of each problem, a random number of observations $n$
from 10 to 100, and the parameter $\tau$, were drawn from the model.

\figref{fig:cg_infer_shape} and \figref{fig:cg_infer_rate} summarize the inferred 
posterior parameters obtained from running only Infer.NET and Infer.NET + KJIT, i.e., 
KJIT with Infer.NET as the oracle. \figref{fig:cg_infer_time} shows the inference 
time of both methods. The plots collectively show that KJIT can deliver posteriors 
in good agreement with those obtained from Infer.NET, at a much lower cost. 
Note that in this task only one message is passed to the factor in each problem.
\figref{fig:cg_infer_time} also indicates that KJIT requires fewer oracle 
consultations as more problems are seen.


\paragraph{Experiment 5: Classification Benchmarks} In the final experiment, we demonstrate
that our method for learning the message operator is able to detect changes
in the distribution of incoming messages via its uncertainty estimate,
and to subsequently update its prediction through additional oracle queries.
The different distributions of incoming messages are achieved
by presenting a sequence of different classification problems to our learner.
We used four 
binary classification datasets from the UCI repository 
\citep{Lichm2013}: banknote authentication, blood transfusion, fertility 
and ionosphere, in the same binary logistic regression setting as 
before. The operator was required to learn just-in-time to send outgoing messages 
$\msg{\factor}{z_i}$ and $\msg{\factor}{p_i}$ on the four problems presented 
in sequence. The training observations consisted of 200 data points subsampled
from each dataset by stratified sampling. 
For the fertility dataset, which contains only 
100 data points, we subsampled half the points. The remaining  data were used as 
 test sets. The uncertainty threshold was set to -9, and the minibatch 
 size was 500. All other parameters were the same as in the earlier JIT learning experiment.

\begin{figure}[ht]
  \centering
  \subfloat[Binary classification error\label{fig:uci_01_loss}]{
  \includegraphics[width=0.49\columnwidth]{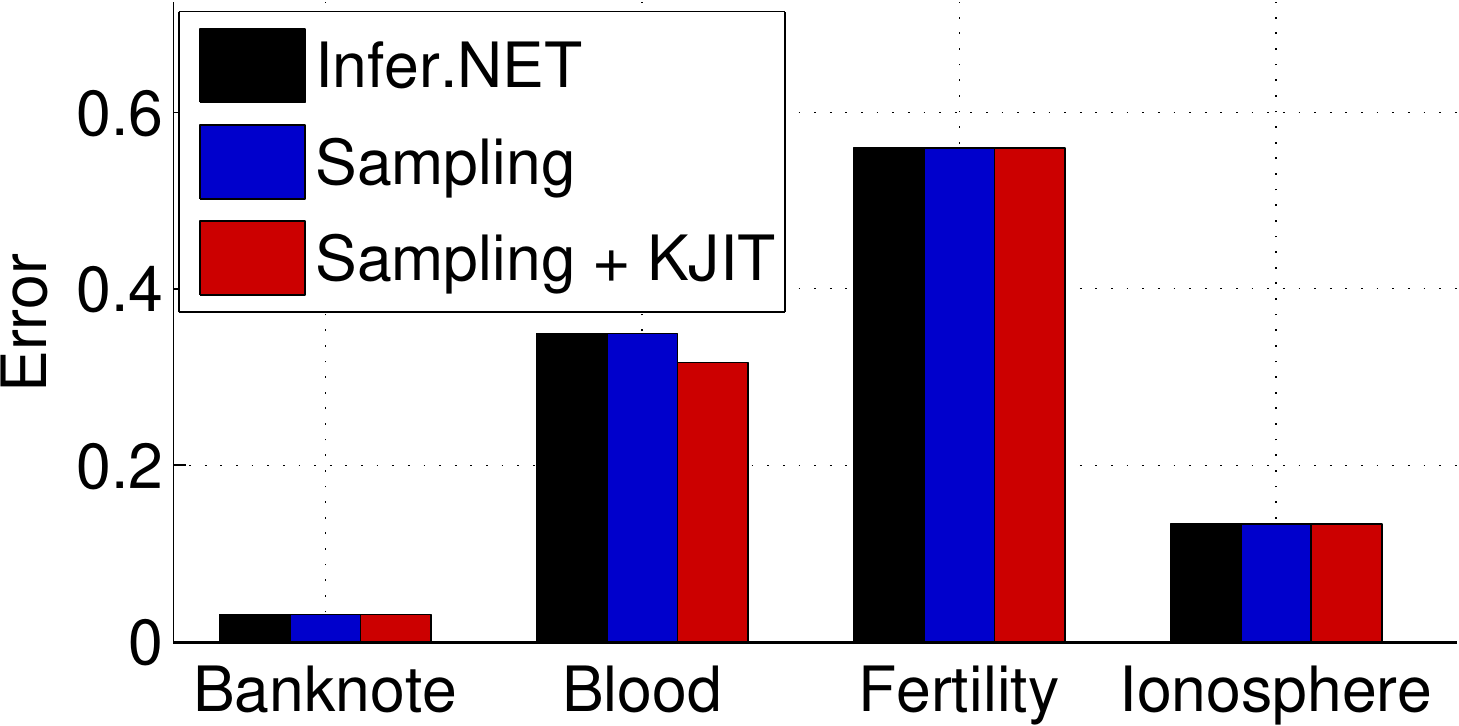}
  }
  \subfloat[Inference time\label{fig:uci_infer_time}]{
  \includegraphics[width=0.49\columnwidth]{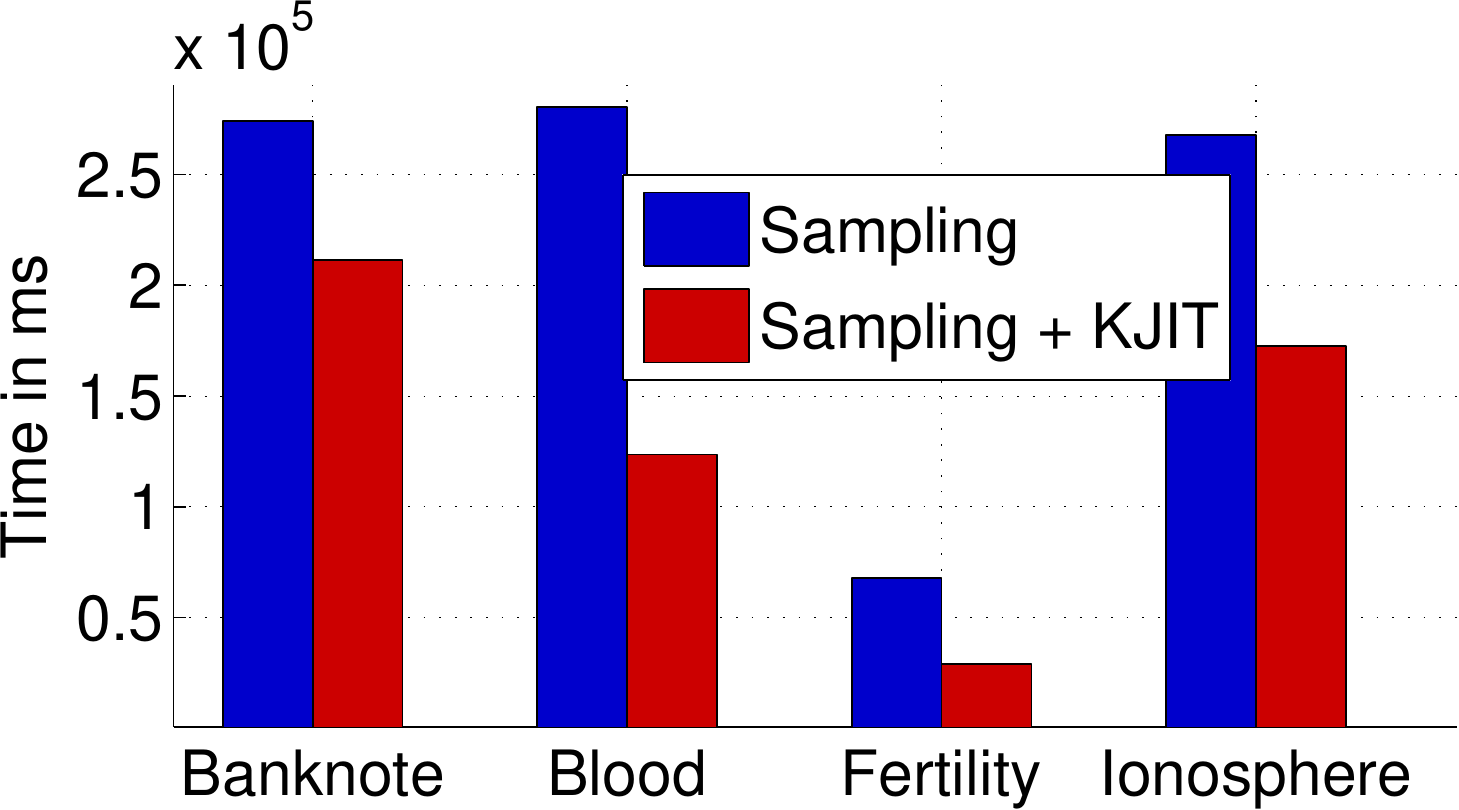}
  }
  \caption{Classification performance and inference times on the four UCI datasets. 
   }
  \label{fig:uci_performance}
\end{figure}

Classification errors on the test sets and inference times are shown in 
\figref{fig:uci_01_loss} and \figref{fig:uci_infer_time}, respectively.
The results demonstrate that KJIT improves the inference time on all the 
problems without sacrificing inference accuracy. The  predictive 
variance of each outgoing message is shown in 
\figref{fig:uci_temporal_uncertainty}. An essential feature to notice is the 
rapid increase of the uncertainty after the first EP iteration of each problem. 
As shown in \figref{fig:uci_in_msgs}, the distributions of incoming messages 
of the four problems are diverse. 
The sharp rise followed by a steady decrease of the uncertainty is a good indicator 
that the operator is able to promptly detect a change in input message distribution,
and robustly adapt to this new distribution by querying the oracle.





\section{CONCLUSIONS AND FUTURE WORK}
\label{sec:Conclusions-and-Future} 

We have proposed a method for learning the mapping between incoming and outgoing
messages to a factor in expectation propagation, which can be used
in place of computationally demanding Monte Carlo estimates of these updates.
Our operator has two main advantages: it can reliably evaluate the uncertainty of its prediction,
so that it only consults a more expensive oracle when it is uncertain,
and it can efficiently update its mapping online, so that it learns
from these additional consultations. Once trained, the learned mapping
performs as well as the oracle mapping, but at a far lower computational cost.
This is in large part due to a novel two-stage random feature representation of the input
messages. One topic of current research is hyperparameter selection:
at present, these are learned on an initial mini-batch of data, however
a better option would be to adapt them online as more data are seen.

\subsubsection*{ACKNOWLEDGEMENT}
We thank the anonymous reviewers for their constructive comments. 
WJ, AG, BL, and ZSz thank the Gatsby Charitable Foundation for the financial
support.

\newpage
\bibliographystyle{abbrvnat}
\bibliography{ref}

\begin{thebibliography}{27}
\providecommand{\natexlab}[1]{#1}
\providecommand{\url}[1]{\texttt{#1}}
\expandafter\ifx\csname urlstyle\endcsname\relax
  \providecommand{\doi}[1]{doi: #1}\else
  \providecommand{\doi}{doi: \begingroup \urlstyle{rm}\Url}\fi

\bibitem[Alvarez et~al.(2011)Alvarez, Rosasco, and Lawrence]{Alvarez2011}
M.~A. Alvarez, L.~Rosasco, and N.~D. Lawrence.
\newblock Kernels for vector-valued functions: a review.
\newblock 2011.
\newblock URL \url{http://arxiv.org/abs/1106.6251}.

\bibitem[Barthelm\'{e} and Chopin(2011)]{Barthelme2011}
S.~Barthelm\'{e} and N.~Chopin.
\newblock {ABC-EP}: Expectation propagation for likelihood-free {B}ayesian
  computation.
\newblock In \emph{ICML}, pages 289--296, 2011.

\bibitem[Bishop(2006)]{Bishop2006}
C.~M. Bishop.
\newblock \emph{Pattern Recognition and Machine Learning (Information Science
  and Statistics)}.
\newblock Springer-Verlag New York, Inc., Secaucus, NJ, USA, 2006.

\bibitem[Christmann and Steinwart(2010)]{Christmann2010}
A.~Christmann and I.~Steinwart.
\newblock Universal kernels on non-standard input spaces.
\newblock In \emph{NIPS}, pages 406--414, 2010.

\bibitem[Criminisi and Shotton(2013)]{CriSho13}
A.~Criminisi and J.~Shotton.
\newblock \emph{Decision Forests for Computer Vision and Medical Image
  Analysis}.
\newblock Springer Publishing Company, Incorporated, 2013.

\bibitem[Dai et~al.(2014)Dai, Xie, He, Liang, Raj, Balcan, and Song]{Dai2014}
B.~Dai, B.~Xie, N.~He, Y.~Liang, A.~Raj, M.~Balcan, and L.~Song.
\newblock Scalable kernel methods via doubly stochastic gradients.
\newblock In \emph{NIPS}, pages 3041--3049, 2014.

\bibitem[Eslami et~al.(2014)Eslami, Tarlow, Kohli, and Winn]{Eslami2014}
S.~M.~A. Eslami, D.~Tarlow, P.~Kohli, and J.~Winn.
\newblock {Just-In-Time Learning for Fast and Flexible Inference}.
\newblock In \emph{NIPS}, pages 154--162, 2014.

\bibitem[Gelman(2006)]{Gelman2006}
A.~Gelman.
\newblock Prior distributions for variance parameters in hierarchical models.
\newblock \emph{Bayesian Analysis}, 1:\penalty0 1--19, 2006.

\bibitem[Goodman et~al.(2008)Goodman, Mansinghka, Roy, Bonawitz, and
  Tenenbaum]{GooManRoyBonTen08}
N.~Goodman, V.~Mansinghka, D.~Roy, K.~Bonawitz, and J.~Tenenbaum.
\newblock Church: A language for generative models.
\newblock In \emph{UAI}, pages 220--229, 2008.

\bibitem[Heess et~al.(2013)Heess, Tarlow, and Winn]{Heess2013}
N.~Heess, D.~Tarlow, and J.~Winn.
\newblock Learning to pass expectation propagation messages.
\newblock In \emph{NIPS}, pages 3219--3227. 2013.

\bibitem[Hutter(2009)]{Hutter2009}
F.~Hutter.
\newblock \emph{Automated Configuration of Algorithms for Solving Hard
  Computational Problems}.
\newblock PhD thesis, University of British Columbia, Department of Computer
  Science, Vancouver, Canada, October 2009.
\newblock \url{http://www.cs.ubc.ca/~hutter/papers/Hutter09PhD.pdf}.

\bibitem[Lakshminarayanan et~al.(2014)Lakshminarayanan, Roy, and
  Teh]{LakRoyTeh14}
B.~Lakshminarayanan, D.~Roy, and Y.-W. Teh.
\newblock Mondrian forests: Efficient online random forests.
\newblock In \emph{NIPS}, pages 3140--3148, 2014.

\bibitem[Le et~al.(2013)Le, Sarl{\'o}s, and Smola]{Le2013}
Q.~Le, T.~Sarl{\'o}s, and A.~Smola.
\newblock Fastfood - approximating kernel expansions in loglinear time.
\newblock \emph{ICML, JMLR W\&CP}, 28:\penalty0 244--252, 2013.

\bibitem[Lichman(2013)]{Lichm2013}
M.~Lichman.
\newblock {UCI} machine learning repository, 2013.
\newblock URL \url{http://archive.ics.uci.edu/ml}.

\bibitem[Minka et~al.(2014)Minka, Winn, Guiver, Webster, Zaykov, Yangel,
  Spengler, and Bronskill]{Minka2014}
T.~Minka, J.~Winn, J.~Guiver, S.~Webster, Y.~Zaykov, B.~Yangel, A.~Spengler,
  and J.~Bronskill.
\newblock {Infer.NET 2.6}, 2014.
\newblock Microsoft Research Cambridge.
  \url{http://research.microsoft.com/infernet}.

\bibitem[Minka(2001)]{Minka2001}
T.~P. Minka.
\newblock \emph{A Family of Algorithms for Approximate Bayesian Inference}.
\newblock PhD thesis, Massachusetts Institute of Technology, 2001.
\newblock
  \url{http://research.microsoft.com/en-us/um/people/minka/papers/ep/minka-thesis.pdf}.

\bibitem[Rahimi and Recht(2007)]{Rahimi2007}
A.~Rahimi and B.~Recht.
\newblock Random features for large-scale kernel machines.
\newblock In \emph{NIPS}, pages 1177--1184, 2007.

\bibitem[Rasmussen and Williams(2006)]{RasWil06}
C.~E. Rasmussen and C.~K.~I. Williams.
\newblock \emph{Gaussian Processes for Machine Learning}.
\newblock MIT Press, Cambridge, MA, 2006.

\bibitem[Rudin(2013)]{Rudin2013}
W.~Rudin.
\newblock \emph{Fourier {Analysis} on {Groups}: {Interscience} {Tracts} in
  {Pure} and {Applied} {Mathematics}, {No}. 12}.
\newblock Literary Licensing, LLC, 2013.

\bibitem[Sch\"{o}lkopf and Smola(2002)]{Scholkopf2002}
B.~Sch\"{o}lkopf and A.~J. Smola.
\newblock \emph{Learning with kernels : support vector machines,
  regularization, optimization, and beyond}.
\newblock Adaptive computation and machine learning. MIT Press, 2002.

\bibitem[Smola et~al.(2007)Smola, Gretton, Song, and Sch\"{o}lkopf]{Smola2007}
A.~Smola, A.~Gretton, L.~Song, and B.~Sch\"{o}lkopf.
\newblock A {H}ilbert space embedding for distributions.
\newblock In \emph{ALT}, pages 13--31, 2007.

\bibitem[Song et~al.(2010)Song, Gretton, and Guestrin]{SonGreGue10}
L.~Song, A.~Gretton, and C.~Guestrin.
\newblock Nonparametric tree graphical models.
\newblock \emph{AISTATS, JMLR W\&CP}, 9:\penalty0 765--772, 2010.

\bibitem[Song et~al.(2011)Song, Gretton, Bickson, Low, and
  Guestrin]{SonGreBicLowGue11}
L.~Song, A.~Gretton, D.~Bickson, Y.~Low, and C.~Guestrin.
\newblock Kernel belief propagation.
\newblock \emph{AISTATS, JMLR W\&CP}, 10:\penalty0 707--715, 2011.

\bibitem[{Stan Development Team}(2014)]{SDT2014}
{Stan Development Team}.
\newblock Stan: A {C}++ library for probability and sampling, version 2.4,
  2014.
\newblock URL \url{http://mc-stan.org/}.

\bibitem[Szab{\'o} et~al.(2014)Szab{\'o}, Sriperumbudur, P{\'o}czos, and
  Gretton]{Szabo2014}
Z.~Szab{\'o}, B.~Sriperumbudur, B.~P{\'o}czos, and A.~Gretton.
\newblock Learning theory for distribution regression.
\newblock Technical report, Gatsby Unit, University College London, 2014.
\newblock (\url{http://arxiv.org/abs/1411.2066}).

\bibitem[Wingate et~al.(2011)Wingate, Goodman, Stuhlmueller, and
  Siskind]{WinGooStuSis11}
D.~Wingate, N.~Goodman, A.~Stuhlmueller, and J.~Siskind.
\newblock Nonstandard interpretations of probabilistic programs for efficient
  inference.
\newblock In \emph{NIPS}, pages 1152--1160, 2011.

\bibitem[Yang et~al.(2015)Yang, Smola, Song, and Wilson]{YanSmoZonWil14}
Z.~Yang, A.~J. Smola, L.~Song, and A.~G. Wilson.
\newblock { {\'A} la carte - learning fast kernels}.
\newblock In \emph{AISTATS}, 2015.
\newblock \url{http://arxiv.org/abs/1412.6493}.

\end{thebibliography}


\begin{thebibliography}{6}
\providecommand{\natexlab}[1]{#1}
\providecommand{\url}[1]{\texttt{#1}}
\expandafter\ifx\csname urlstyle\endcsname\relax
  \providecommand{\doi}[1]{doi: #1}\else
  \providecommand{\doi}{doi: \begingroup \urlstyle{rm}\Url}\fi

\bibitem[Breiman(2001)]{Breiman2001}
L.~Breiman.
\newblock Random {Forests}.
\newblock \emph{Mach. Learn.}, 45\penalty0 (1):\penalty0 5--32, 2001.

\bibitem[Eslami et~al.(2014)Eslami, Tarlow, Kohli, and Winn]{Eslami2014}
S.~M.~A. Eslami, D.~Tarlow, P.~Kohli, and J.~Winn.
\newblock {Just-In-Time Learning for Fast and Flexible Inference}.
\newblock In \emph{NIPS}, pages 154--162, 2014.

\bibitem[Geurts et~al.(2006)Geurts, Ernst, and Wehenkel]{Geurts2006}
P.~Geurts, D.~Ernst, and L.~Wehenkel.
\newblock Extremely {Randomized} {Trees}.
\newblock \emph{Mach. Learn.}, 63\penalty0 (1):\penalty0 3--42, 2006.

\bibitem[Rahimi and Recht(2007)]{Rahimi2007}
A.~Rahimi and B.~Recht.
\newblock Random features for large-scale kernel machines.
\newblock In \emph{NIPS}, pages 1177--1184, 2007.

\bibitem[Rudin(2013)]{Rudin2013}
W.~Rudin.
\newblock \emph{Fourier {Analysis} on {Groups}: {Interscience} {Tracts} in
  {Pure} and {Applied} {Mathematics}, {No}. 12}.
\newblock Literary Licensing, LLC, 2013.

\bibitem[Smola et~al.(2007)Smola, Gretton, Song, and Sch\"{o}lkopf]{Smola2007}
A.~Smola, A.~Gretton, L.~Song, and B.~Sch\"{o}lkopf.
\newblock A {H}ilbert space embedding for distributions.
\newblock In \emph{ALT}, pages 13--31, 2007.

\end{thebibliography}

\clearpage
\newpage
\onecolumn
\appendix


{\Large Kernel-Based Just-In-Time Learning for Passing Expectation Propagation
Messages}

{\large SUPPLEMENTARY MATERIAL}

\section{MEDIAN HEURISTIC FOR GAUSSIAN KERNEL ON MEAN EMBEDDINGS  }
\label{sec:median_heuristic}

In the proposed KJIT, there are two kernels: the inner kernel $k$ for computing
mean embeddings, and the outer Gaussian kernel $\kappa$ defined on the mean
embeddings. Both of the kernels depend on a number of parameters. In this section,
we describe a heuristic to choose the kernel parameters. We emphasize that this
heuristic is merely for computational convenience. A full parameter selection
procedure like cross validation or evidence maximization will likely yield a
better set of parameters. We use this heuristic in the initial mini-batch phase
before the actual online learning.

Let $\{ r^{(l)}_{i} \mid l= 1,\ldots, c, \text{ and } i=1,\ldots, n \}$ be a set 
of $n$ incoming message tuples collected during the mini-batch phase, from $c$ variables 
neighboring the factor. Let $R_i := (r^{(l)}_i)_{l=1}^c$ be the $i^{th}$ tuple, and 
let $\mathsf{r}_i := \times_{l=1}^c r^{(l)}_i$ be the product of incoming messages 
in the $i^{th}$ tuple. Define $S_i$ and $\mathsf{s}_i$ to be the corresponding 
quantities of another tuple of messages. We will drop the subscript $i$ when considering 
only one tuple.

Recall that the kernel on two tuples of messages $R$ and $S$ is given by 
\begin{align*}
\kappa(R, S) &= \kappa(\mathsf{r}, \mathsf{s}) =
\exp\left(-\frac{\|\mu_{\mathsf{r}}-
\mu_{\mathsf{s}} \|_{\mathcal{H}}^{2}}{2\gamma^{2}}\right)  \\
  &= \exp\left(-\frac{1}{2\gamma^{2}}\left\langle \mu_{\mathsf{r}},
  \mu_{\mathsf{r}}\right\rangle +\frac{1}{\gamma^{2}}\left\langle
  \mu_{\mathsf{r}}, \mu_{\mathsf{s}}\right\rangle
  -\frac{1}{2\gamma^{2}}\left\langle
  \mu_{\mathsf{s}},\mu_{\mathsf{s}}\right\rangle \right),
\end{align*}
where 
$\langle \mu_{\mathsf{r}},\mu_{\mathsf{s}}  \rangle
= \mathbb{E}_{x \sim \mathsf{r}} \mathbb{E}_{y \sim \mathsf{s}} 
k(x-y)$. 
The inner kernel $k$ is a Gaussian kernel defined on the domain 
$\mathcal{X} := \mathcal{X}^{(1)} \times \cdots \times \mathcal{X}^{(c)} $
where $\mathcal{X}^{(l)}$ denotes the domain of $r^(l)$. For simplicity, we assume 
that $\mathcal{X}^{(l)}$ is one-dimensional. The Gaussian kernel $k$ takes
the form 
\begin{align*}
k(x-y) &= \exp\left(-\frac{1}{2}\left(x-y\right)^{\top}
\Sigma^{-1}\left(x-y\right) \right) 
= \prod_{l=1}^c \exp \left( -\frac{(x_j - y_j)^2}{2 \sigma^2_l}  \right),
\end{align*}
where $\Sigma = \diag(\sigma^2_1, \ldots, \sigma^2_c)$. The heuristic for choosing 
$\sigma^2_1, \ldots, \sigma^2_c$ and $\gamma$ is as follows. 

\begin{enumerate}
    \item Set $\sigma_l^2 := \frac{1}{n} \sum_{i=1}^n \mathbb{V}_{x_l \sim
            r_i^{(l)}}[x_l]$ where $ \mathbb{V}_{x_l \sim
                r_i^{(l)}}[x_l]$ denotes the variance of $r_i^{(l)}$.
            \item With $\Sigma = \diag(\sigma^2_1, \ldots, \sigma^2_c)$ as defined 
                in the previous step, set $\gamma^2 := \mathrm{median}\left( \{ \| \mu_{\mathsf{r}_i}
                - \mu_{\mathsf{s}_j}\|^2\}_{i,j=1}^n \right)$.
\end{enumerate}

\section{KERNELS AND RANDOM FEATURES}
\label{sec:kernel_random_features}

This section reviews relevant kernels and their random feature representations.

\subsection{RANDOM FEATURES}

This section contains a summary of \citesup{Rahimi2007}'s random
Fourier features for a translation invariant kernel. 

A kernel $k(x,y)=\left\langle \phi(x),\phi(y)\right\rangle $ in general
may correspond to an inner product in an infinite-dimensional space
whose feature map $\phi$ cannot be explicitly computed. In \citesup{Rahimi2007},
methods of computing an approximate feature maps $\hat{\phi}$ were
proposed. The approximate feature maps are such that $k(x,y)\approx\hat{\phi}(x)^{\top}\hat{\phi}(y)$
(with equality in expectation) where $\hat{\phi}(x), \hat{\phi}(y) \in\mathbb{R}^{D}$
and $D$ is the number of random features. High $D$ yields a better
approximation with higher computational cost. Assume $k(x,y)=k(x-y)$ (translation invariant)
and $x,y\in\mathbb{R}^{d}$. Random Fourier features $\hat{\phi}(x)\in\mathbb{R}^{D}$
such that $k(x,y)\approx\hat{\phi}(x)^{\top}\hat{\phi}(y)$ are generated
as follows:
\begin{enumerate}
\item Compute the Fourier transform $\hat{k}$ of the kernel $k$:
    $\hat{k}(\omega)=\frac{1}{2\pi}\int e^{-j\omega^{\top}\delta}k(\delta)\,
    d\delta$.
\item Draw $D$ i.i.d. samples $\omega_{1},\ldots,\omega_{D}\in\mathbb{R}^{d}$
from $\hat{k}$. 
\item Draw $D$ i.i.d samples $b_{1},\ldots,b_{D}\in\mathbb{R}$ from $U[0,2\pi]$
(uniform distribution).
\item $\hat{\phi}(x)=\sqrt{\frac{2}{D}}\left(\cos\left(\omega_{1}^{\top}x+b_{1}\right),\ldots,\cos\left(\omega_{D}^{\top}x+b_{D}\right)\right)^{\top}\in\mathbb{R}^{D}$
\end{enumerate}

\paragraph{Why It Works }
\begin{thm}
Bochner's theorem \citepsup{Rudin2013}. A continuous kernel $k(x,y)=k(x-y)$ on $\mathbb{R}^{m}$
is positive definite iff $k(\delta)$ is the Fourier transform of
a non-negative measure.
\end{thm}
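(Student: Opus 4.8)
The plan is to prove the two implications separately: the ``if'' direction is a one-line computation, while the ``only if'' direction — reconstructing the non-negative spectral measure from a continuous positive definite $k$ — is the real content and proceeds by a compactness/limiting argument. For sufficiency, suppose $k(\delta)=\int_{\mathbb{R}^{m}}e^{j\omega^{\top}\delta}\,d\mu(\omega)$ for some finite non-negative Borel measure $\mu$. Then for arbitrary $x_{1},\dots,x_{n}\in\mathbb{R}^{m}$ and $c_{1},\dots,c_{n}\in\mathbb{C}$, Fubini gives
\begin{equation*}
\sum_{i,j=1}^{n}c_{i}\overline{c_{j}}\,k(x_{i}-x_{j})=\int_{\mathbb{R}^{m}}\Big|\sum_{i=1}^{n}c_{i}e^{j\omega^{\top}x_{i}}\Big|^{2}\,d\mu(\omega)\ge 0,
\end{equation*}
so every Gram matrix $[k(x_{i}-x_{j})]_{ij}$ is positive semidefinite, and continuity of $k$ follows from dominated convergence since $\mu$ is finite.

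For necessity, assume $k$ is continuous and positive definite. First I would record the elementary consequences obtained from $1\times1$ and $2\times2$ Gram matrices: $k(0)\ge0$, $k(-\delta)=\overline{k(\delta)}$, and $|k(\delta)|\le k(0)$, so $k$ is bounded. Next, for $T>0$ define the windowed spectral density
\begin{equation*}
f_{T}(\omega):=\frac{1}{(2\pi)^{m}T^{m}}\int_{[0,T]^{m}}\int_{[0,T]^{m}}k(s-t)\,e^{-j\omega^{\top}(s-t)}\,ds\,dt.
\end{equation*}
Writing the double integral as a limit of Riemann sums and invoking positive definiteness of $k$ (with coefficients $c_{i}=e^{-j\omega^{\top}s_{i}}$) shows $f_{T}(\omega)\ge0$ for every $\omega$; changing variables to $\delta=s-t$ recasts $f_{T}$ as the Fourier transform of $k(\delta)$ weighted by a triangular (Fej\'er-type) factor $\prod_{l=1}^{m}(1-|\delta_{l}|/T)_{+}$ supported on $[-T,T]^{m}$.

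The heart of the argument is to show the family $\{f_{T}(\omega)\,d\omega\}_{T>0}$ is tight with uniformly bounded total mass. Since $f_{T}$ need not be integrable for fixed $T$, I would first regularize — multiply by a narrow Gaussian $e^{-\varepsilon\|\omega\|^{2}}$, apply Fubini, and let $\varepsilon\downarrow0$ — and use the continuity of $k$ at the origin to conclude $\int_{\mathbb{R}^{m}}f_{T}(\omega)\,d\omega\le k(0)$ uniformly in $T$. By Prokhorov's theorem (or Helly selection) there is a sequence $T_{n}\to\infty$ along which $f_{T_{n}}(\omega)\,d\omega$ converges weakly to a finite non-negative measure $\mu$; passing to the limit in the representation of $k$ as the transform of $f_{T_{n}}$ — legitimate because $k$ is bounded and continuous and the triangular weights increase pointwise to $1$ — yields $k(\delta)=\int_{\mathbb{R}^{m}}e^{j\omega^{\top}\delta}\,d\mu(\omega)$ for all $\delta$. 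I expect this tightness step to be the main obstacle: positive definiteness alone controls $f_{T}$ only pointwise, and it is precisely the continuity of $k$ at $0$ that prevents spectral mass from escaping to infinity as $T\to\infty$, so making the bound quantitative while handling the non-integrability of $f_{T}$ via the mollifier is the delicate part.

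An alternative that sidesteps explicit densities is to view $g\mapsto\int_{\mathbb{R}^{m}}k\,g$ as a linear functional on the convolution-square cone $\{f*\widetilde{f}:f\in\mathcal{S}(\mathbb{R}^{m})\}$, where $\widetilde{f}(x)=\overline{f(-x)}$: positive definiteness makes this functional non-negative there, the Fourier transform of $f*\widetilde{f}$ is $|\widehat{f}|^{2}\ge0$, and one then invokes the Riesz--Markov representation theorem to identify the tempered distribution $\widehat{k}$ with a non-negative measure. The analytic difficulty — controlling total mass and extending the functional to all of $C_{0}$ — is essentially the same as before.
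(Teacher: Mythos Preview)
Your proof is a correct and standard treatment of Bochner's theorem: the sufficiency computation is exactly right, and the necessity argument via the Fej\'er-windowed spectral densities $f_T$, uniform mass bound from continuity at $0$, and Prokhorov/Helly extraction is the classical route found in Rudin and elsewhere. The alternative sketch via Riesz--Markov on the cone of convolution squares is also a legitimate path.

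However, the paper does not prove this statement at all. It is stated as a cited result (attributed to Rudin) and used as a black box to justify the random Fourier feature construction: the only use made of it is the consequence that a properly scaled translation-invariant kernel has a Fourier transform which is a probability distribution, so that $k(x-y)=\mathbb{E}_{\omega}[\eta_{\omega}(x)\eta_{\omega}(y)^{*}]$ can be Monte-Carlo approximated. So there is nothing to compare against --- your proposal supplies a full proof where the paper simply invokes the theorem. If the goal is to match the paper, a citation suffices; if the goal is an independent proof, what you have written is essentially complete, with the tightness step correctly identified as the place requiring care.
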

Furthermore, if a translation invariant kernel $k(\delta)$ is properly
scaled, Bochner's theorem guarantees that its Fourier transform $p(\omega)$
is a  probability distribution. From this fact, we have 
\[
k(x-y)=\int\hat{k}(\omega)e^{j\omega^{\top}\left(x-y\right)}\, d\omega=\mathbb{E}_{\omega}\left[\eta_{\omega}(x)\eta_{\omega}(y)^{*}\right], 
\]
where $j=\sqrt{-1}$, $\eta_{\omega}(x)=e^{j\omega^{\top}x}$ and $\cdot^{*}$ denotes
the complex conjugate. Since both $\hat{k}$ and $k$ are real, the complex
exponential contains only the cosine terms. Drawing $D$ samples lowers the variance of the approximation.
\begin{thm}
\label{thm:Separation-of-variables.}Separation of variables. Let
$\hat{f}$ be the Fourier transform of $f$. If $f(x_{1},\ldots,x_{d})=f_{1}(x_{1})\cdots f_{d}(x_{d})$,
then $\hat{f}(\omega_{1},\ldots,\omega_{d})=\prod_{i=1}^{d}\hat{f}_{i}(\omega_{i})$. 
\end{thm}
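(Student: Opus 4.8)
The plan is to reduce the $d$-dimensional Fourier transform to a product of one-dimensional transforms, exploiting the fact that both the integrand and the Fourier kernel factorize coordinatewise. I would start from the definition
\[
\hat f(\omega_1,\ldots,\omega_d) = \frac{1}{(2\pi)^d}\int_{\mathbb{R}^d} e^{-j\omega^\top x}\, f(x)\, dx,
\]
substitute the hypothesis $f(x)=\prod_{i=1}^d f_i(x_i)$, and use the elementary identity $e^{-j\omega^\top x}=\prod_{i=1}^d e^{-j\omega_i x_i}$. The integrand then becomes $\prod_{i=1}^d e^{-j\omega_i x_i} f_i(x_i)$, i.e.\ a product of functions each depending on a single coordinate, and the normalization $(2\pi)^{-d}$ splits as $\prod_{i=1}^d (2\pi)^{-1}$.

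Next I would apply Fubini's theorem to rewrite the $d$-fold integral as an iterated integral and factor each term out of the integrals over the remaining coordinates, yielding
\[
\hat f(\omega_1,\ldots,\omega_d) = \prod_{i=1}^d \left( \frac{1}{2\pi}\int_{\mathbb{R}} e^{-j\omega_i x_i} f_i(x_i)\, dx_i \right) = \prod_{i=1}^d \hat f_i(\omega_i),
\]
which is exactly the claimed identity. If the unnormalized convention for $\hat f$ is used instead, the computation is identical up to the constant.

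The only step requiring any care — and hence the main \emph{obstacle}, such as it is — is justifying the interchange of integration order: Fubini's theorem needs $x\mapsto\prod_{i=1}^d |f_i(x_i)|$ to be integrable on $\mathbb{R}^d$. This follows at once from Tonelli's theorem provided each $f_i\in L^1(\mathbb{R})$, since then $\int_{\mathbb{R}^d}\prod_{i=1}^d |f_i(x_i)|\, dx = \prod_{i=1}^d \|f_i\|_{L^1} < \infty$. In the present application the relevant $f_i$ are (appropriately scaled) translation-invariant kernels whose Fourier transforms are probability densities, so this integrability holds and the factorization is legitimate; no further technicality arises.
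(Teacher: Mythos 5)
The paper states this separation-of-variables result as a standard fact and gives no proof of its own, so there is nothing to diverge from: your argument --- factorizing $e^{-j\omega^{\top}x}=\prod_{i}e^{-j\omega_{i}x_{i}}$, splitting the normalization, and justifying the interchange of integrals via Fubini--Tonelli under $f_i\in L^1(\mathbb{R})$ --- is correct and is exactly the standard justification the paper implicitly relies on. Your closing remark about conventions is apt, since the exact identity $\hat f=\prod_i\hat f_i$ requires the per-coordinate normalization $(2\pi)^{-d}$ (or the unnormalized transform), whereas the paper's displayed definition writes a single factor of $\frac{1}{2\pi}$; this is only a constant-convention issue and does not affect the substance.
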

Theorem \ref{thm:Separation-of-variables.} suggests that the random
Fourier features can be extended to a product kernel by drawing $\omega$
independently for each kernel. 

\subsection{MV (MEAN-VARIANCE) KERNEL }
\label{sec:mv_kernel}

Assume there are $c$ incoming messages $R:=\left(r^{(l)}\right)_{l=1}^{c}$ 
and $S:=\left(s^{(l)}\right)_{l=1}^{c}$
. Assume that
\begin{align*}
\mathbb{E}_{r^{(l)}}\left[x\right] & =m_{l}\\
\mathbb{V}_{r^{(l)}}\left[x\right] & =v_{l}\\
\mathbb{E}_{s^{(l)}}\left[y\right] & =\mu_{l}\\
\mathbb{V}_{s^{(l)}}\left[y\right] & =\sigma_{l}^{2}.
\end{align*}
Incoming messages are not necessarily Gaussian. 
The MV (mean-variance) kernel is defined as a product
kernel on means and variances. 
\begin{align*}
\kappa_{\text{mv}}\left(R, S\right) 
 &=\prod_{i=1}^{c}k\left(\left(m_{i}-\mu_{i}\right)/w_{i}^{m}\right)
 \prod_{i=1}^{c}k\left(\left(v_{i}-\sigma_{i}^{2}\right)/w_{i}^{v}\right),
\end{align*}
where $k$ is a Gaussian kernel with unit width. The kernel $\kappa_{\text{mv}}$
has $P:=\left(w_{1}^{m},\ldots,w_{c}^{m},w_{1}^{v},\ldots,w_{c}^{v}\right)$
as its parameters. With this kernel, we treat messages as finite dimensional
vectors. All incoming messages $(s^{(i)})_{i=1}^{c}$ are 
represented as $\left(\mu_{1},\ldots,\mu_{c},\sigma_{1}^{2},\ldots,\sigma_{c}^{2}\right)^{\top}$.
This treatment reduces the problem of having distributions as inputs
to the familiar problem of having input points from a Euclidean space.
The random features of \citesup{Rahimi2007} can be applied straightforwardly. 

\subsection{EXPECTED PRODUCT KERNEL}
\label{sub:Expected-Product-Kernel}
Given two distributions $r(x)=\mathcal{N}(x;m_{r},V_{r})$ and $s(y)=\mathcal{N}(y;m_{s},V_{s})$
($d$-dimensional Gaussian), the expected product kernel is defined
as 
\[
\kappa_{\text{pro}}(r, s)=\left\langle \mu_{r},\mu_{s}\right\rangle _{\mathcal{H}}=\mathbb{E}_{r}\mathbb{E}_{s}k(x-y),
\]
where $\mu_{r}:=\mathbb{E}_{r}k(x,\cdot)$ is the mean embedding of
$r$, and we assume that the kernel $k$ associated with $\mathcal{H}$
is translation invariant i.e., $k(x,y)=k(x-y)$. The goal here is
to derive random Fourier features for the expected product kernel.
That is, we aim to find $\hat{\phi}$ such that $\kappa_{\text{pro}}(r, s)\approx\hat{\phi}(r)^{\top}\hat{\phi}(s)$
and $\hat{\phi}\in\mathbb{R}^{D}$.

We first give some results which will be used to derive the Fourier
features for inner product of mean embeddings.
\begin{lem}
If $b\sim\mathcal{N}(b;0,\sigma^{2})$, then $\mathbb{E}[\cos(b)]=\exp\left(-\frac{1}{2}\sigma^{2}\right)$.\label{lemma:e_cos}\end{lem}
\begin{proof}
We can see this by considering the characteristic function of $x\sim\mathcal{N}(x;\mu,\sigma^{2})$
which is given by 
\[
c_{x}(t)=\mathbb{E}_{x}\left[\exp\left(itb\right)\right]
=\exp\left(itm-\frac{1}{2}\sigma^{2}t^{2} \right).
\]
For $m=0,t=1$, we have 
\[
c_{b}(1)=\mathbb{E}_{b}\left[\exp(ib)\right]=
\exp\left(-\frac{1}{2}\sigma^{2}\right)=\mathbb{E}_{b}\left[\cos(b)\right],
\]
where the imaginary part  $i\sin(tb)$ vanishes.
\end{proof}

From \citesup{Rahimi2007} which provides random features for $k(x-y)$,
we immediately have
\begin{align*}
\mathbb{E}_{r}\mathbb{E}_{s}k(x-y) & \approx\mathbb{E}_{r}\mathbb{E}_{s}\frac{2}{D}\sum_{i=1}^{D}\cos\left(w_{i}^{\top}x+b_{i}\right)\cos\left(w_{i}^{\top}y+b_{i}\right)\\
 & =\frac{2}
 {D}\sum_{i=1}^{D}\mathbb{E}_{r(x)}\cos\left(w_{i}^{\top}x+b_{i}\right)\mathbb{E}_{s(y)}\cos\left(w_{i
 }^{\top}y+b_{i}\right),
\end{align*}
where $\{w_{i}\}_{i=1}^{D}\sim\hat{k}(w)$ (Fourier transform of $k$)
and $\{b_{i}\}_{i=1}^{D}\sim U\left[0,2\pi\right]$. 

Consider $\mathbb{E}_{r(x)}\cos\left(w_{i}^{\top}x+b_{i}\right)$.
Define $z_{i}=w_{i}^{\top}x+b_{i}$. So $z_{i}\sim\mathcal{N}(z_{i};w_{i}^{\top}m_{r}+b_{i},w_{i}^{\top}V_{r}w_{i})$.
Let $d_{i}\sim\mathcal{N}(0,w_{i}^{\top}V_{r}w_{i})$. Then, $r(d_{i}+w_{i}^{\top}m_{r}+b_{i})=\mathcal{N}(w_{i}^{\top}m_{r}+b_{i},w_{i}^{\top}V_{r}w_{i})$
which is the same distribution as that of $z_{i}$. From these definitions
we have,
\begin{align*}
\mathbb{E}_{r(x)}\cos\left(w_{i}^{\top}x+b_{i}\right) & =\mathbb{E}_{r(z_{i})}\cos(z_{i})\\
 & =\mathbb{E}_{r(d_{i})}\cos\left(d_{i}+w_{i}^{\top}m_{r}+b_{i}\right)\\
 & \overset{(a)}
 {=}\mathbb{E}_{r(d_{i})}\cos(d_{i})\cos(w_{i}^{\top}m_{r}+b_{i})-\mathbb{E}_{r(d_{i})}\sin(d_{i})\sin
 (w_{i}^{\top}m_{r}+b_{i})\\
 & \overset{(b)}{=}\cos(w_{i}^{\top}m_{r}+b_{i})\mathbb{E}_{r(d_{i})}\cos(d_{i})\\
 & \overset{(c)}{=}\cos(w_{i}^{\top}m_{r}+b_{i})
 \exp\left(-\frac{1}{2}w_{i}^{\top}V_{r}w_{i}\right),
\end{align*}
where at $(a)$ we use $\cos(\alpha+\beta)=\cos(\alpha)\cos(\beta)-\sin(\alpha)\sin(\beta)$.
We have $(b)$ because $\sin$ is an odd function and $\mathbb{E}_{r(d_{i})}\sin(d_{i})=0$.
The last equality $(c)$ follows from Lemma \ref{lemma:e_cos}. It
follows that the random features $\hat{\phi}(r)\in\mathbb{R}^{D}$
are given by
\[
\hat{\phi}(r)=\sqrt{\frac{2}{D}}\left(\begin{array}{c}
\cos(w_{1}^{\top}m_{r}+b_{1})\exp\left(-\frac{1}{2}w_{1}^{\top}V_{r}w_{1}\right)\\
\vdots\\
\cos(w_{D}^{\top}m_{r}+b_{D})\exp\left(-\frac{1}{2}w_{D}^{\top}V_{r}w_{D}\right)
\end{array}\right).
\]

Notice that the translation invariant kernel $k$
provides $\hat{k}$ from which $\{w_{i}\}_{i}$ are
drawn. For a different type of distribution $r$, we only need to
be able to compute $\mathbb{E}_{r(x)}\cos\left(w_{i}^{\top}x+b_{i}\right)$.
With $\hat{\phi}(r)$, we have $\kappa_\text{pro}(r, s)\approx\hat{\phi}(r)^{\top}\hat{\phi}(s)$
with equality in expectation.

\paragraph{Analytic Expression for Gaussian Case}

For reference, if $r, s$ are normal distributions and $k$
is a Gaussian kernel, an analytic expression is available. Assume
$k(x-y)=\exp\left(-\frac{1}{2}\left(x-y\right)^{\top}\Sigma^{-1}\left(x-y\right)\right)$
where $\Sigma$ is the kernel parameter. Then
\begin{align*}
\mathbb{E}_{r}\mathbb{E}_{s}k(x-y) & =\sqrt{\frac{\det(D_{rs})}{\det(\Sigma^{-1})}}\exp\left(-\frac{1}{2}\left(m_{r}-m_{s}\right)^{\top}D_{rs}\left(m_{r}-m_{s}\right)\right),\\
D_{rs} & :=\left(V_{r}+V_{s}+\Sigma\right)^{-1}.
\end{align*}

\paragraph{Approximation Quality}

{} The following result compares the randomly generated features to
the true kernel matrix using various numbers of random features $D$. 
For each $D$, we repeat 10 trials, where the randomness in each trial 
arises from the construction of the random features. 
Samples are univariate normal distributions $\mathcal{N}(m, v)$ where 
$m \sim \mathcal{N}(0, 9)$ and $v \sim \text{Gamma}(3, 1/4)$ (shape-rate 
parameterization). The kernel parameter was $\Sigma:=\sigma^2 I$ where $\sigma^2=3$.

\begin{center}
\includegraphics[width=10cm]{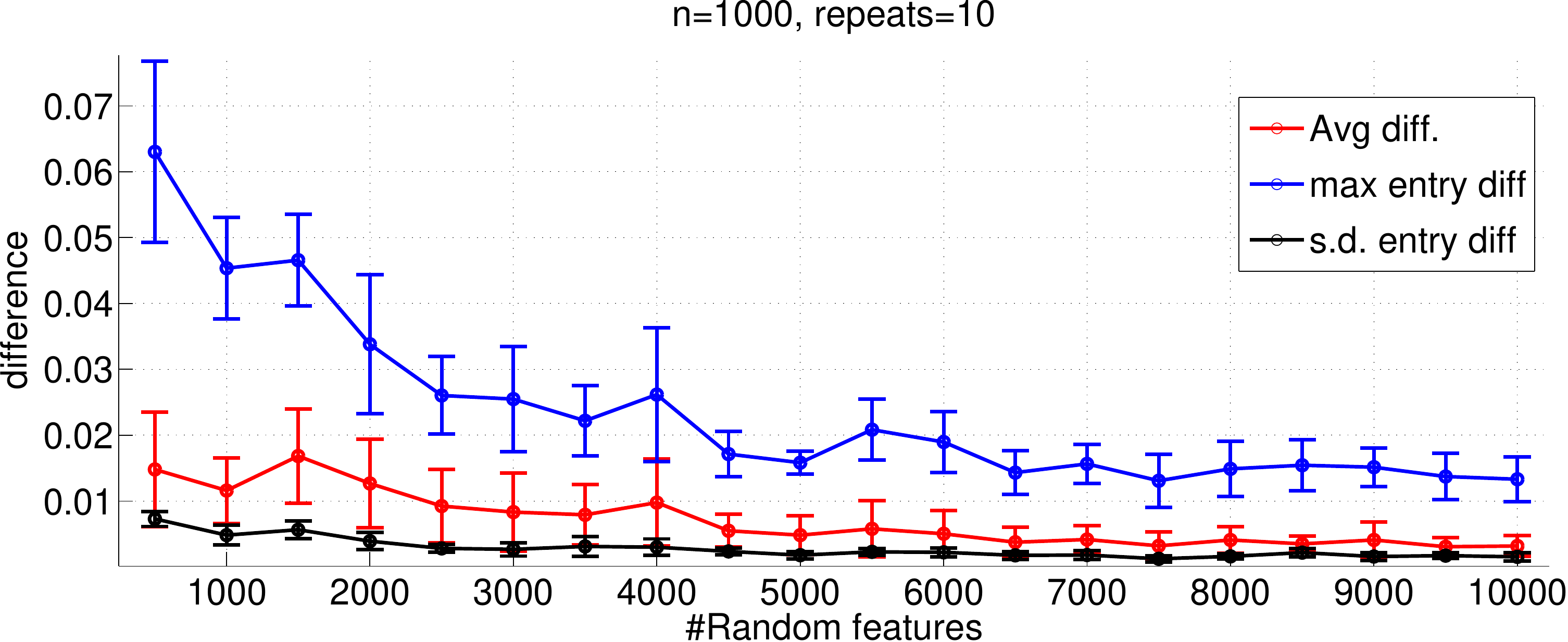}
\end{center}

``max entry diff''
refers to the maximum entry-wise difference between the true kernel
matrix and the approximated kernel matrix.

\subsection{PRODUCT KERNEL ON MEAN EMBEDDINGS}
Previously, we have defined an expected product kernel on single distributions.
One way to define a kernel between two tuples of more than one incoming message is to take 
a product of the kernels defined on each message.

Let $\mu_{r^{(l)}}:=\mathbb{E}_{r^{(l)}(a)}k^{(l)}(\cdot,a)$ be the mean
embedding \citepsup{Smola2007} of the distribution $r^{(l)}$ into
RKHS $\mathcal{H}^{(l)}$ induced by the kernel $k$. Assume $k^{(l)}=k_{\text{gauss}}^{(l)}$
(Gaussian kernel) and assume there are $c$ incoming messages $R:=(r^{(i)}(a^{(i)}))_{i=1}^{c}$
and $S:=(s^{(i)}(b^{(i)}))_{i=1}^{c}$. A product of expected product
kernels is defined as 
\begin{align*}
\kappa_{\text{pro, prod}}(R, S) & :=\left\langle \bigotimes_{l=1}^{c}\mu_{r^{(l)}},
\bigotimes_{l=1}^{c}\mu_{s^{(l)}}\right\rangle _{\otimes_{l}\mathcal{H}^{(l)}} \\ 
&= 
\prod_{l=1}^{c}\mathbb{E}_{r^{(l)}(a^{(l)})}\mathbb{E}_{s^{(l)}(b^{(l)})}k_{\text{gauss}}^{(l)} 
\left(a^{(l)}, b^{(l)} \right)\approx\hat{\phi}(R)^{\top}\hat{\phi}(S),
\end{align*}
where $\hat{\phi}(R)^{\top}\hat{\phi}(S)=\prod_{l=1}^{c}
\hat{\phi}^{(l)}(r^{(l)})^{\top}\hat{\phi}^{(l)}(s^{(l)})$.
The feature map $\hat{\phi}^{(l)}(r^{(l)})$ can be estimated by applying
the random Fourier features to $k_{\text{gauss }}^{(l)}$and taking
the expectations $\mathbb{E}_{r^{(l)}(a)}\mathbb{E}_{s^{(l)}(b)}$.
The final feature map is $\hat{\phi}(R)=\hat{\phi}^{(1)}(r^{(1)})\circledast\hat{\phi}^{(2)}(r^{(2)})\circledast\cdots\circledast\hat{\phi}^{(c)}(r^{(c)})\in\mathbb{R}^{D^{c}}$, where
$\circledast$ denotes a Kronecker product and we assume that $\hat{\phi}^{(l)}\in\mathbb{R}^{D}$
for $l\in\{1,\ldots,c\}$. 

\subsection{SUM KERNEL ON MEAN EMBEDDINGS}

If we instead define the kernel as the sum of $c$ kernels, we have
\begin{align*}
\kappa_{\text{pro, sum}}(R, S) & =\sum_{l=1}^{c}\left\langle \mu_{r^{(l)}},\mu_{s^{(l)}}\right\rangle _{\mathcal{H}^{(l)}}\\
 & \approx\sum_{l=1}^{c}\hat{\phi}^{(l)}(r^{(l)})^{\top}\hat{\phi}^{(l)}(s^{(l)})\\
 & =\hat{\varphi}(R)^{\top}\hat{\varphi}(S),
\end{align*}
where $\hat{\varphi}(R):=\left(\hat{\phi}^{(1)}(r^{(1)})^{\top},\ldots,\hat{\phi}^{(c)}
(r^{(c)})^{\top}\right)^{\top}\in\mathbb{R}^{cD}$.

\subsection{NUMBER OF RANDOM FEATURES FOR GAUSSIAN KERNEL ON MEAN EMBEDDINGS }
We quantify the effect of $D_\mathrm{in}$ and $D_\mathrm{out}$
empirically as follows. We generate 300 Gaussian messages, compute the true Gram
matrix and the approximate Gram matrix given by the random features,
and report the Frobenius norm of the difference of the two matrices
on a grid of $D_\mathrm{in}$ and $D_\mathrm{out}$. For each $(D_\mathrm{in},D_\mathrm{out})$,
we repeat 20 times with a different set of random features and report
the averaged Frobenius norm.

\begin{center}
\includegraphics[width=8cm]{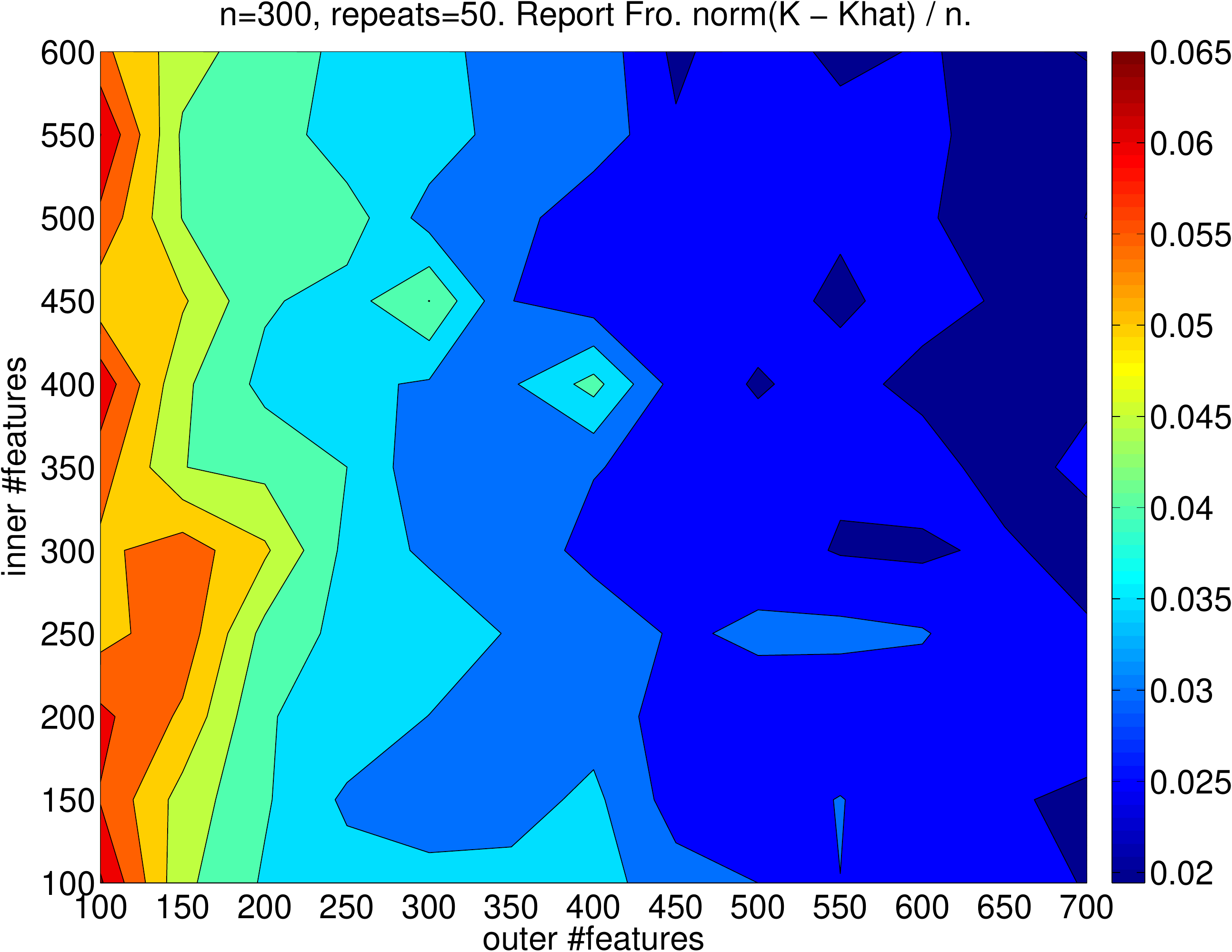} 
\end{center}

The result suggests that $D_\mathrm{out}$ has more effect in improving the
approximation.

\section{MORE DETAILS ON EXPERIMENT 1: BATCH LEARNING }
\label{sec:batch_learning_detail}

There are a number of kernels on distributions we may use for just-in-time learning.
To find the most suitable kernel, we compare the performance of each on a 
collection of incoming and output
messages at the logistic factor in the binary logistic regression problem i.e., 
same problem as in experiment $1$ in the main text.
All messages are collected by running EP 20 times on generated toy data. 
Only messages in the first five iterations are considered as messages passed in 
the early phase of EP vary more than in a near-convergence phase.
The regression output to be learned is the numerator of \eqref{eq:msgPassing:EP}. 

A training set of 5000 messages and a test set of 3000 messages are obtained by 
subsampling all the collected messages. 
Where random features are used, kernel widths 
and regularization parameters are chosen by leave-one-out cross validation.
To get a good sense of the approximation error from the random features, we also 
compare with incomplete Cholesky factorization (denoted by IChol), a widely used 
Gram matrix approximation technique. We use
hold-out cross validation with randomly chosen training and validation sets
for parameter selection,
and kernel ridge regression in its dual form when the incomplete Cholesky 
factorization is used. 

Let $\factor$ be the logistic factor and $m_{\factor \rightarrow i}$ be an outgoing message. 
Let $q_{\factor \rightarrow i}$ be the ground truth belief message (numerator) associated with 
$m_{\factor\rightarrow i}$. The error metric we use is 
$\mathrm{KL}[q_{\factor \rightarrow i}\,||\, \hat{q}_{\factor \rightarrow i}]$
where  $\hat{q}_{\factor \rightarrow i}$ are the belief messages estimated by a learned 
regression function. The following table reports the mean of the log KL-divergence 
and standard deviations.

\begin{center}

\begin{tabular}{lcc}
\hline
& \textbf{mean log KL} & \textbf{s.d. of log KL}  \\\hline
 Random features + MV Kernel & -6.96 & 1.67 \\ 
 {Random features + Expected product kernel on joint embeddings} & -2.78 & 1.82  \\
 {Random features + Sum of expected product kernels} & -1.05 & 1.93  \\
 {Random features + Product of expected product kernels} & -2.64 & 1.65  \\
 \textbf{Random features + Gaussian kernel on joint embeddings} (KJIT) & -8.97 & 1.57  \\
 {IChol + sum of Gaussian kernel on embeddings} & -2.75 & 2.84  \\
 IChol + Gaussian kernel on joint embeddings & -8.71 & 1.69  \\\hline
 Breiman's random forests \citepsup{Breiman2001} & -8.69 & 1.79\\
 Extremely randomized trees \citepsup{Geurts2006} & -8.90 & 1.59 \\
 \citetsup{Eslami2014}'s random forests \citepsup{Eslami2014} & -6.94 & 3.88 \\
 \hline
\end{tabular}
\end{center}

The MV kernel is defined in \secref{sec:mv_kernel}. 
Here product (sum) of expected product kernels refers to a product (sum) of kernels, where each 
is an expected product kernel defined on one incoming message. Evidently, the Gaussian 
kernel on joint mean embeddings performs significantly better than other kernels. 
Besides the proposed method, we also compare the message prediction performance
to  Breiman's random forests \citepsup{Breiman2001}, extremely randomized trees
\citepsup{Geurts2006}, and \citetsup{Eslami2014}'s random forests. We use
scikit-learn 
toolbox for the extremely randomized trees and Breiman's random forests. For
\citetsup{Eslami2014}'s random forests, we reimplemented the method as
closely as possible according to the description given in \citetsup{Eslami2014}.  
In all cases, the number of trees is set to 64.
Empirically we observe that decreasing the
log KL error below -8 will not yield 
a noticeable performance gain in the actual EP. 





To verify that the uncertainty estimates given by KJIT coincide with the actual 
predictive performance (i.e., accurate prediction when confident), we plot the 
predictive variance against the KL-divergence error on both the training and
test sets. The results are shown in \figref{fig:logistic_predvar_g}.
The uncertainty estimates show a positive correlation with the KL-divergence errors. 
It is instructive to note that no point lies at the bottom right i.e., making a
large error while being confident. 
The fact that the errors on the training set are roughly the same as the errors
on the test set indicates that the operator does not overfit.

\begin{figure}[t]
  \centering
  \includegraphics[width=8cm]{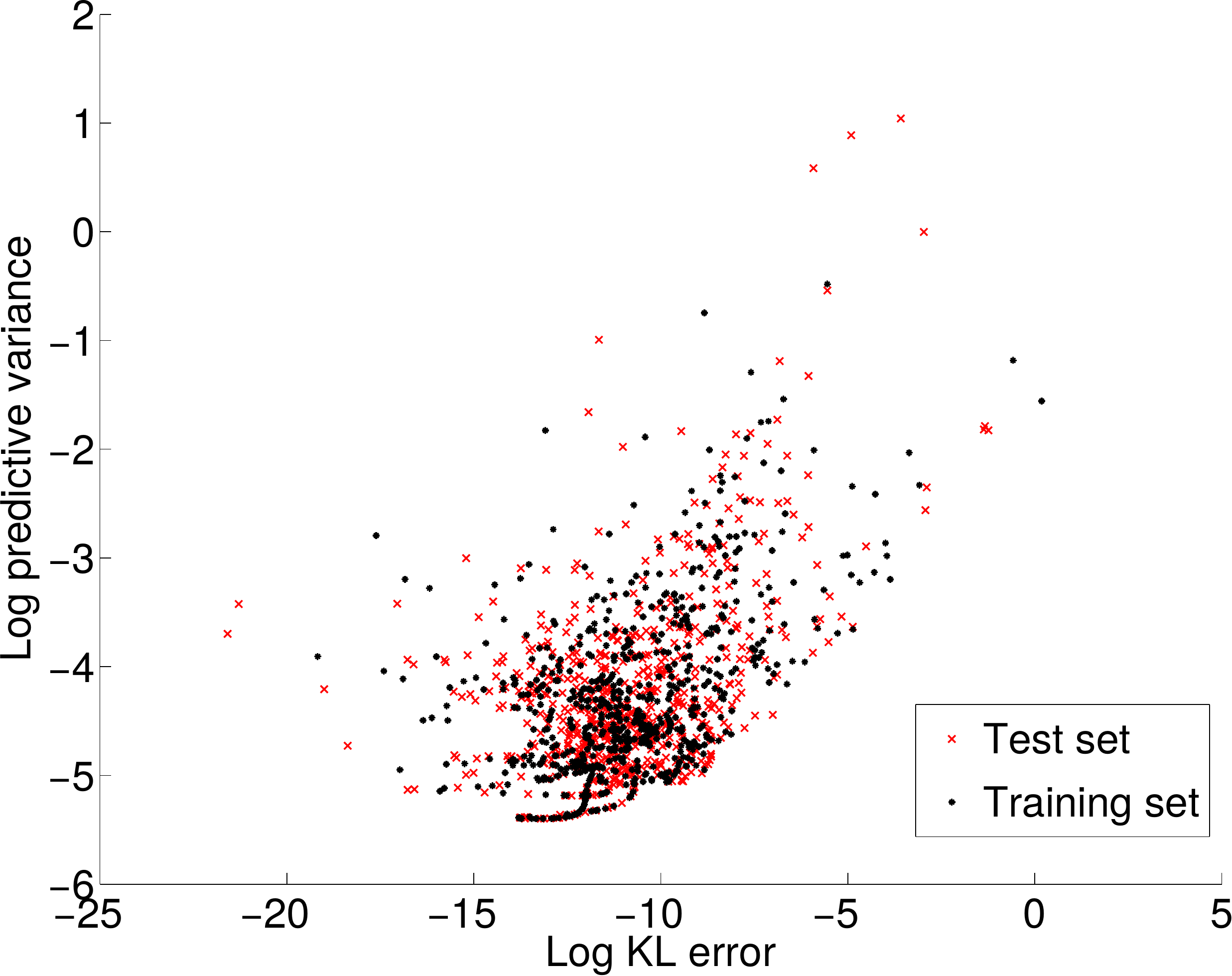}
  \caption{KL-divergence error versus predictive variance for predicting the 
  mean of $m_{f \rightarrow z_i}$ (normal distribution) in the logistic factor problem. }
  \label{fig:logistic_predvar_g}
\end{figure}

\bibliographystylesup{abbrvnat}
\bibliographysup{refappendix} 
\end{document}